\algrenewcommand\algorithmicindent{0.5em} 
\newcommand{\calG}{\ensuremath{\mathcal{G}}\xspace}
\newcommand{\calM}{\ensuremath{\mathcal{M}}\xspace}
\newcommand{\calP}{\ensuremath{\mathcal{P}}\xspace}
\newcommand{\calO}{\ensuremath{\mathcal{O}}\xspace}
\def\naive{{na\"{\i}ve}\xspace}
\newtheorem{thm}{Theorem}
\newtheorem{definition}[thm]{Definition}
\newtheorem{lemma}{Lemma}
\DeclareMathOperator*{\argmin}{arg\,min}
\newcommand{\ignore}[1]{}
\newcommand\algname[1]{\textsf{#1}\xspace}
\newcommand\Gfull{\ensuremath{G^{\textrm{full}}}\xspace}
\newcommand\Gcov{\ensuremath{G^{\textrm{cov}}}\xspace}
\newcommand\Guncov{\ensuremath{G^{\textrm{uncov}}}\xspace}
\newcommand\Guncovp{\ensuremath{G'^{\textrm{uncov}}}\xspace}
\newcommand\Shome{\ensuremath{s_{\textrm{home}}}\xspace}
\newcommand\Tbound{\ensuremath{T_{\textrm{bound}}}\xspace}
\newcommand\Tconst{\ensuremath{T_{\textrm{const}}}\xspace}
\newcommand\Trc{\ensuremath{t_{\textrm{rc}}}\xspace}
\newcommand\Ssc{\ensuremath{s_{\textrm{sc}}}\xspace}
\newcommand\Sstart{\ensuremath{s_{\textrm{start}}}\xspace}
\newcommand\Xexec{\ensuremath{x_{\textrm{exec}}}\xspace}
\def\os#1{\textcolor{magenta}{#1}}
\def\AlgFontSize{\small}
\def\CaptionTextSize{\small}
\newcommand\BibTeX{{\rmfamily B\kern-.05em \textsc{i\kern-.025em b}\kern-.08em
T\kern-.1667em\lower.7ex\hbox{E}\kern-.125emX}}
\begin{document}

\runninghead{Islam et al.}

\title{Provably Constant-time Planning and Replanning for Real-time Grasping Objects off a Conveyor Belt}

\author{Fahad Islam\affilnum{1} 
        and
        Oren Salzman\affilnum{2}
        and 
        Aditya Agarwal\affilnum{1}
        and
        Maxim Likhachev\affilnum{1}}

\affiliation{%
\affilnum{1}The Robotics Institute, Carnegie Mellon University\\
\affilnum{2}Technion-Israel Institute of Technology}

\corrauth{Fahad Islam.}

\email{fahad.islam@fulbrightmail.org}

\begin{abstract}
In warehouse and manufacturing environments, manipulation platforms are frequently deployed at conveyor belts to perform pick and place tasks. Because objects on the conveyor belts are moving, robots have limited time to pick them up. This brings the requirement for fast and reliable motion planners that could provide provable real-time planning guarantees, which the existing algorithms do not provide. Besides the planning efficiency, the success of manipulation tasks relies heavily on the accuracy of the perception system which is often noisy, especially if the target objects are perceived from a distance. For fast moving conveyor belts, the robot cannot wait for a perfect estimate before it starts executing its motion. In order to be able to reach the object in time, it must start moving early on (relying on the initial noisy estimates) and adjust its motion on-the-fly in response to the pose updates from perception. We propose a planning framework that meets these requirements by providing provable constant-time planning and replanning guarantees.
To this end, we first introduce and formalize a new class of algorithms called \emph{Constant-Time Motion Planning algorithms (CTMP)} that guarantee to plan in constant time and within a user-defined time bound. We then present our planning framework for grasping objects off a conveyor belt as an instance of the CTMP class of algorithms.
We present it, give its analytical properties and show experimental analysis both in simulation and on a real robot.
\end{abstract}

\keywords{Motion Planning, Automation, Manipulation}

\maketitle

\section{Introduction}
Conveyor belts are widely used in automated distribution, warehousing, as well as for manufacturing and production facilities. In the modern times robotic manipulators are being deployed extensively at the conveyor belts for automation and faster operations~\cite{zhang2018gilbreth}. In order to maintain a high-distribution throughput, manipulators must pick up moving objects without having to stop the conveyor for every grasp. In this work, we consider the problem of motion planning for grasping moving objects off a conveyor. An object in motion imposes a requirement that it should be picked up in a short window of time. The motion planner for the arm, therefore, must compute a path within a bounded time frame to be able to successfully perform this task.

\begin{figure}[t]
    \centering
    \includegraphics[trim=0 50 0 100, clip, width=\columnwidth]{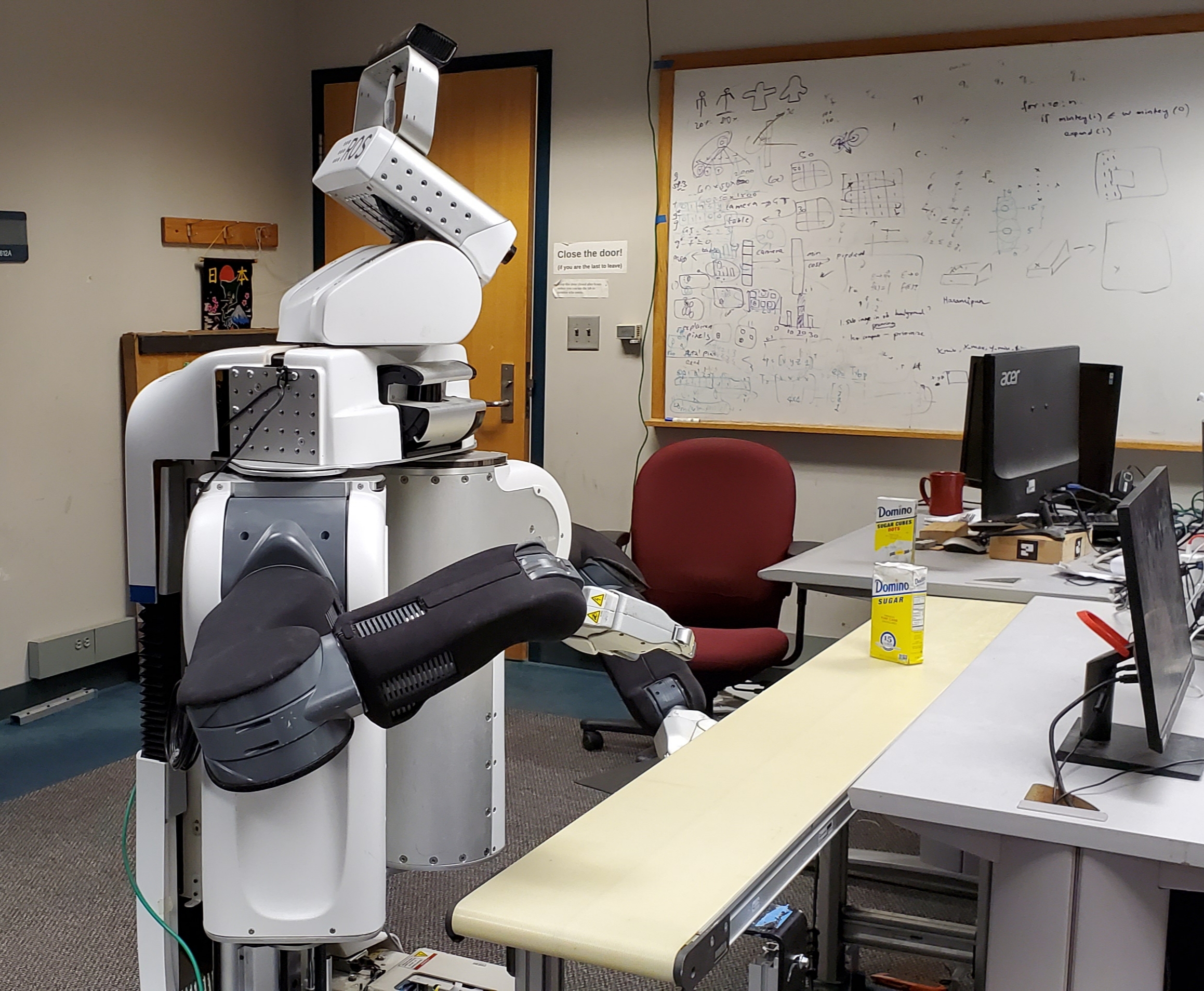}
    \caption{
    \CaptionTextSize
    A scene demonstrating the PR2 robot picking up a moving object (sugar box) off a conveyor belt.}
    \label{fig:intro_pic}
\end{figure}

Manipulation relies on high quality detection and localization of moving objects. When the object first enters the robot's field of view, the initial perception estimates of the object's pose are often inaccurate. Consider the example of an object (sugar box) moving along the conveyor towards the robot in Fig.~\ref{fig:intro_pic}, shown through an image sequence as captured by the robot's Kinect camera in Fig.~\ref{fig:pose_sequence}. 
The plot in Fig.~\ref{fig:pose_sequence} shows the variation of the error between the filtered input point cloud and a point cloud computed from the predicted pose from our Iterative Closest Point (ICP)-based perception strategy~\cite{ISL19} as the object gets closer to the camera. We observe that the error decreases as the object moves closer, indicating that the point clouds overlap more closely due to more accurate pose estimates closer to the camera.

However, if the robot waits too long to get an accurate estimate of the object pose, the delay in starting plan execution could cause the robot to miss the object. The likelihood of this occurring as the speed of the conveyor increases. Therefore, the robot should start executing a plan computed for the initial pose and as it gets better estimates, it should repeatedly replan for the new goals. However, for every replanning query, the time window for the pickup shrinks. This makes the planner's job difficult to support real-time planning.

Furthermore, the planning problem is challenging because the motion planner has to account for the dynamic object and thus plan with time as one of the planning dimensions. It should generate a valid trajectory that avoids collision with the environment around it and also with the target object to ensure that it does not damage or topple it during the grasp. Avoiding collisions with the object requires precise geometric collision checking between the object geometry and the geometry of the manipulator.
The robot arms also have kinodynamic constraints such as torque and velocity limits that the motion planner may have to account for while computing the plans, especially when the robots must move at high speeds.
The resulting complexity of the planning problem makes it infeasible to plan online for this task.

Motivated by these challenges, we propose an algorithm that leverages offline preprocessing to provide bounds on the planning time when the planner is invoked online. Our key insight is that in our domain the manipulation task is highly repetitive. Even for different object poses, the computed paths are quite similar and can be efficiently reused to speed up online planning. Based on this insight, we derive a method that precomputes a representative set of paths with some auxiliary datastructures offline and uses them online in a way that provides \emph{constant-time planning guarantee}.
We present it as an instance of a new class of algorithms which we call Constant-time Motion Planning (CTMP) algorithms.
To the best of our knowledge, our approach is the first to provide constant-time planning guarantee on generating motions for indefinite horizons i.e. all the way to the goal.

%
We experimentally show that constant-time planning and replanning capability is necessary for a successful conveyor pickup task. Specifically if we only perform one-time planning, that is planning only once either for the very first and potentially inaccurate pose estimate or for a delayed but accurate pose estimate of the object, the robot frequently fails to pick the object.
\begin{figure}[t]
    \centering
    \begin{subfigure}{.23\textwidth}
        \includegraphics[height=2.9cm]{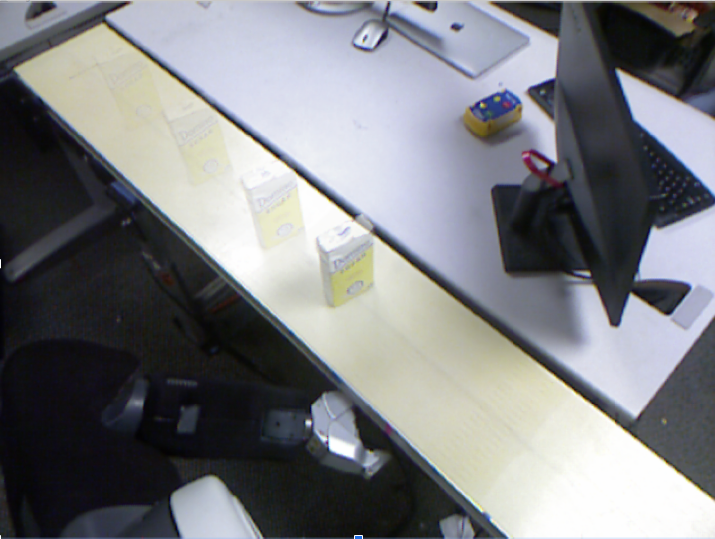}
        \caption{}
        \label{fig:obj1}
    \end{subfigure}
    \begin{subfigure}{0.25\textwidth}
        \includegraphics[height=2.9cm]{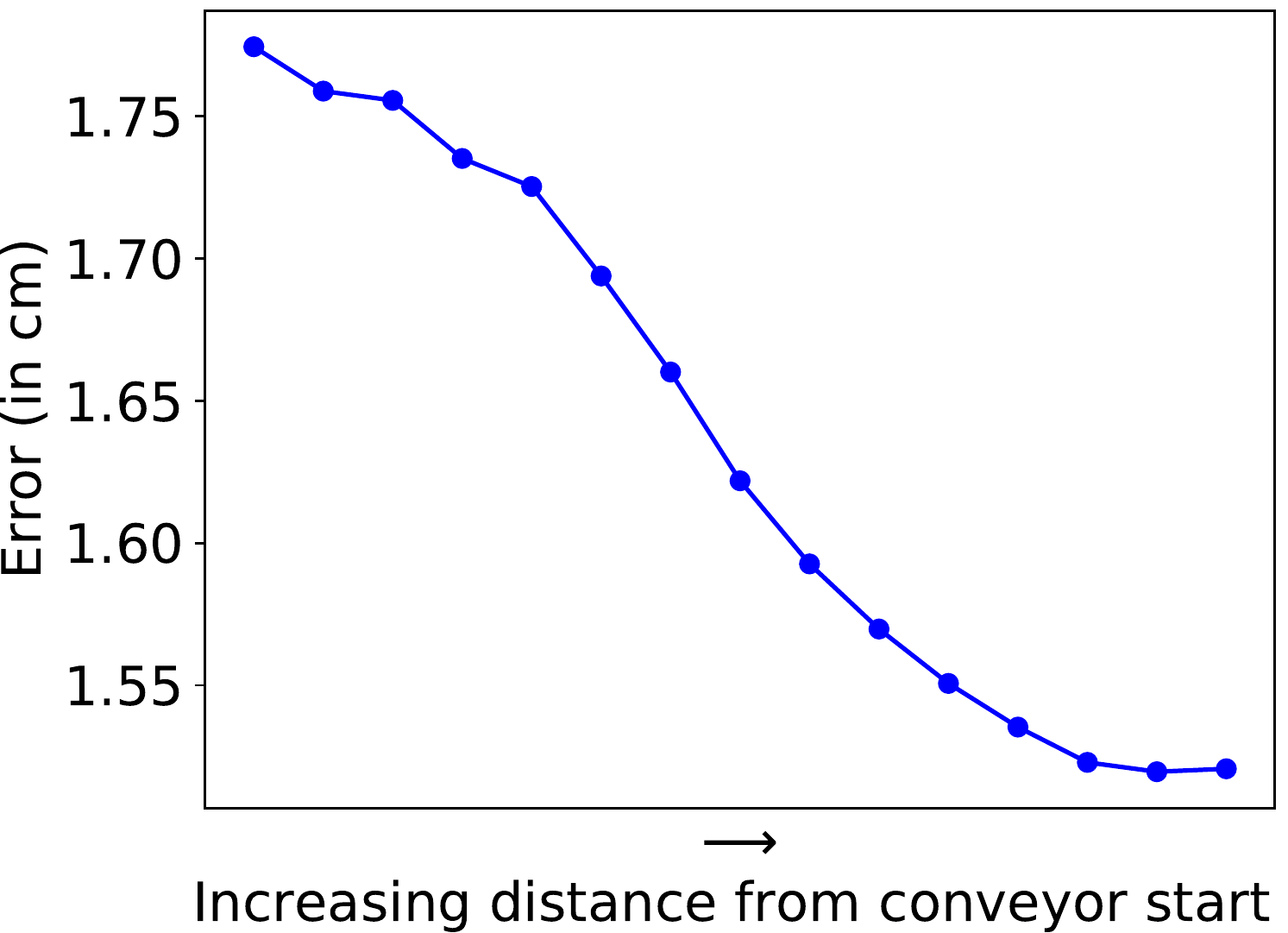}
        \caption{}
        \label{fig:obj2}
    \end{subfigure}
    \caption{
    \CaptionTextSize
    (\subref{fig:obj1})~Depiction of an object moving along a conveyor towards the robot.
    (\subref{fig:obj2})~Pose error as a function of the distance from the conveyor's start. Specifically we use ADD-S error \cite{add_metric}.
    }
    \label{fig:pose_sequence}
\end{figure}
\subsection{Statement of Contributions}
We make the following contributions in this work:
\begin{enumerate}[noitemsep]
    \item We develop a provably constant-time planning and replanning framework for grasping moving objects off a conveyor belt.
    \item We prove that the algorithm is complete and is guaranteed to run in constant time and within a user specified time bound.
    \item We provide experimental analysis of our approach in simulation as well as on a physical robotic system.
    \item \label{c1} We introduce and formalize a new class of algorithms called Constant-Time Motion Planning (CTMP) algorithms and show that the proposed approach for grasping objects off the conveyor is an instance of CTMP class of algorithms.
    \item \label{c2} We develop a kinodynamic motion planner to account for the dynamic constraints of the robot including joint torque and velocity limits.
\end{enumerate}
This article is in continuation of our previous work presented in~\cite{islamprovably} and the contributions~\ref{c1} and~\ref{c2} specifically, are the extensions. In addition to these extensions, we provide space complexity analysis of our approach and report detailed preprocessing statistics of our experiments to highlight the improvement over the brute force method. We also remove one of the assumptions of~\cite{islamprovably} which says that the environment remains static up to a certain time in execution.

\section{Related work}
\subsection{Motion planning for conveyor pickup task}
Existing work on picking moving objects has focused on different aspects of the problem ranging from closed-loop controls, servoing to object perception and pose estimation, motion planning and others~\cite{zhang2018gilbreth,allen1993automated, han2019toward, stogl2017tracking}. 
Here, we focus on motion-planning related work. Time-configuration space representation was introduced to avoid moving obstacles~\cite{fraichard1993dynamic,cefalo2013task,yang2018planning}. Specifically in ~\cite{yang2018planning}, a bidirectional sampling-based method with a time-configuration space representation was used to plan motions in dynamic environments to pickup moving objects. While their method showed real-time performance in complex tasks, it used fully-specified goals; namely knowing the time at which the object should be picked, which weakens the completeness guarantee. Furthermore their method is probablistically complete and therefore, does not offer constant-time behavior.
Graph-search based approaches have also been used for the motion-planning problem~\cite{menon2014motion, cowley2013perception}. The former uses a kinodynamic motion planner to smoothly pick up moving objects i.e., without an impactful contact. A heuristic search-based motion planner that plans with dynamics and could generate high-quality trajectories with respect to the time of execution was used. While this planner provides strong guarantees on the solution quality, it is not real-time and thus cannot be used online.
The latter work demonstrated online real-time planning capability. The approach plans to a pregrasp pose with pure kinematic planning and relies on Cartesian-space controllers to perform the pick up. The usage of the Cartesian controller limits the types of objects that the robot can grasp. It also provided no guarantee to be able to generate a plan in time to execute.

\subsection{Preprocessing-based planning}
Preprocessing-based motion planners often prove beneficial for real-time planning. They analyse the configuration space offline to generate some auxiliary information that can be used online to speed up planning. 
Probably the best-known example is the Probablistic Roadmap Method (PRM)~\cite{kavraki1996probabilistic} which precomputes a roadmap that can answer any query by connecting the start and goal configurations to the roadmap and then searching the roadmap. PRMs are fast to query yet they do not provide constant-time guarantees.
To account for dynamic environments, several PRM method-based extensions have been proposed~\cite{yoshida2011reactive,belghith2006anytime}. However, these methods often require computationally expensive repairing operations which cause additional overheads. 
%

A provably constant-time planner was recently proposed in~\cite{ISL19}. Given a start state and a goal region, it precomputes a compressed set of paths that can be utilised online to plan to any goal within the goal region in bounded time. As we will see, our approach while algorithmically different, draws some inspiration from this work.
Both of the above two methods (\cite{kavraki1996probabilistic,ISL19}) mainly target pure kinematic planning and thus they cannot be used for the conveyor-planning problem which is dynamic in nature.

Another family of preprocessing-based planners utilises previous experiences to speed up the search~\cite{BAG12,CSMOC15,PCCL12}. Experience graphs~\cite{PCCL12}, provide speed up in planning times for repetitive tasks by trying to reuse previous experiences. These methods are also augmented with sparsification techniques (see e.g.,~\cite{DB14,SSAH14}) to reduce the memory footprint of the algorithm.
Unfortunately, none of the mentioned algorithms provide fixed planning-time guarantees that we strive for in our application.

\subsection{Online replanning and real time planning}
The conveyor-planning problem can be modelled as a Moving Target Search problem (MTS) which is a widely-studied topic in the graph search-based planning literature~\cite{ishida1991moving,ishida1995moving,koenig2007speeding,sun2010moving}. 
These approaches interleave planning and execution incrementally and update the heuristic values of the state space to improve the distance estimates to the moving target. Unfortunately, in high-dimensional planning problems, this process is computationally expensive which is why these approaches are typically used for two-dimensional grid problem such as those encountered in video games. More generally, real-time planning is widely considered in the search community (see, e.g.,~\cite{KL06,KS09,korf1990real}).
However, as mentioned, these works are typically applicable to low-dimensional search spaces.
%
\ignore{
\section{Constant-Time Motion Planning (CTMP) for Indefinite Horizons}
Before we present our approach to planning to pick up objects off a conveyor, let us first formally define a new \emph{class} of planners that we call \emph{Constant-Time Motion Planning algorithms (CTMP)}. As we will see, the algorithm we will propose for real-time grasping objects off a conveyor belt is a CTMP algorithm. However, we believe that the definition of CTMP is of interest regardless of the specific motion-planning problem that is considered. 

\begin{definition}[CTMP algorithm]
\label{ctmp:def}
Let \algname{ALG} be some motion-planning algorithm,
$\Tbound$ a user-controlled time bound
and
$\Tconst < \Tbound$ a small time constant whose value is independent of the size and complexity of the motion-planning problem.
\algname{ALG} is said to be a \emph{CTMP algorithm} if it is guaranteed to solve any instance of the motion-planning problem within~$\Tbound$. 
\end{definition}

Note that 
(i)~the time bound~$\Tbound$ is a tunable input that can be used to tradeoff query time with the memory footprint of the algorithm
and that
(ii)~limited-horizon planners such as  real-time heuristic search-based planners also plan in constant-time however, they do not solve any instance of the motion-planning problem---only those that have a short planning horizon.
}

\section{Constant-Time Motion Planning (CTMP) for Indefinite Horizons}
Before we formally define our motion-planning problem, we introduce a new \emph{class} of algorithms that we call \emph{Constant-Time Motion Planning algorithms (CTMP)} that are specially designed for repetitive robotic tasks. We will see that the algorithm we propose for real-time grasping objects off a conveyor belt is an instance of a CTMP algorithm. However, we believe that their definition is of independent interest regardless of the specific problem instance that is considered. 
While limited-horizon planners like real-time heuristic search-based planners also plan in constant-time, this paper specifically addresses the indefinite horizon planning problem, that is to plan all the way to the goal~\cite{KL06,KS09,korf1990real}.

\subsection{CTMP Problem Definition}
We consider the problem setting where the robot has a fixed start state~$\Sstart$ and a set of goals~$G$, the representation and specification of which are domain dependent.
The algorithm is queried for a pair ($\Sstart, g$) where $g \in G$ and attempts to compute a feasible path from \Sstart to $g$ within a (small) constant time.



\begin{definition}[\textbf{CTMP Algorithm}]
\label{ctmp:def}
Let \algname{ALG} be a motion-planning algorithm,
$\Tbound$ a user-controlled time bound
and
$\Tconst < \Tbound$ a small time constant whose value is independent of the size and complexity of the motion-planning problem.
\algname{ALG} is said to be a \emph{CTMP algorithm} if it is guaranteed to answer any motion planning query within~\Tbound.
\end{definition}
The input time~$\Tbound$ is a tunable parameter that can be used to trade off query time with the memory footprint and preprocessing efforts of the algorithm.
It is important to note that an algorithm that returns NO$\_$PATH for any query is CTMP. Thus, unless endowed with additional properties CTMP is a weak algorithmic property.
We now define one such property, namely CTMP-Completeness, that makes a CTMP algorithm interesting and useful.
\subsection{CTMP-Completeness}
We assume a CTMP algorithm has access to a regular motion-planning algorithm~$\calP$ (that is not necessarily CTMP).
We introduce a new notion of completeness for CTMP algorithms. Prior to that, we need to define some preliminaries.

\begin{definition}[Reachability]
\label{def:reachable}
A goal $g \in G$ is said to be reachable from a state \Sstart if~$~\calP$ can find a path to it within a (sufficiently large) time bound~$T_\textrm{\calP}$. 
\end{definition}

\begin{definition}[Goal Coverage]
\label{def:covered}
    A reachable goal $g \in G$ is said to be covered by the CTMP algorithm for a state \Sstart if (in the query phase) it can plan from \Sstart to $g$ while satisfying Def.~\ref{ctmp:def}.
\end{definition}

We are now equipped to define CTMP-Completeness.
\begin{definition}[\textbf{CTMP-Completeness}]
\label{def:complete}
    An algorithm is said to be CTMP-complete if it covers all the reachable goals $g \in G$ for \Sstart.
\end{definition}
A CTMP-algorithm provides a planning time bound reduction of ~$T_\calP : \Tbound$, where ~$\Tbound \ll T_\textrm{\calP}$, while still guaranteeing a success rate no worse than what \calP would provide with a time bound of $T_\calP$.
A CTMP-complete algorithm distinguishes from a CTMP algorithm for it must \emph{cover} all \emph{reachable} goals in $G$ for \Sstart in the preprocessing phase, so that in the query phase, it can find a plan to any reachable goal within \Tbound time. A CTMP algorithm which is not CTMP-complete however, may return failure even if the queried goal is reachable.

Note that the notion of CTMP-completeness is decoupled from the completeness guarantee (by conventional definition) of the underlying motion planner \calP, hence we can view a CTMP algorithm as a meta planner. The CTMP algorithm only provides guarantees for the subset of $G$ which is reachable and hence its completeness properties differ from the completeness properties of~\calP. However, the size of the reachable set of goals is largely dependent on the performance of \calP.

\section{CTMP for Grasping Objects off A Conveyor Belt---Problem Setup}
Our system is comprised of 
a robot manipulator,
a conveyor belt moving at some known velocity,
a set of known objects~$\calO$ that need to be grasped and 
a perception system that is able to estimate the type of object and its location on the conveyor belt. Here, we assume that the geometric models of the target objects are known apriori and we are given a set of feasible grasps for each object~$o \in \calO$.

Given a pose $g$ of an object $o \in \calO$, our task is to plan the motion of the robot such that it grasps~$o$ from the conveyor belt at some future time.
Unfortunately, the perception system may give inaccurate object poses.
Thus, the pose $g$ will be updated by the perception system as the robot is executing its motion. 
To allow for the robot to move towards the updated pose in real time, we introduce the additional requirement that planning should be done within a user-specified time bound~\Tbound.
For ease of exposition, when we say that we plan to a pose $g$ of $o$ that is given by the perception system, 
we mean that we plan the motion of the robot such that it will be able to pick~$o$ from the conveyor belt at some future time. 
This is explained in detail in Sec.~\ref{sec:robot_results} and in Fig.~\ref{fig:pe}.

We denote by $\Gfull$ the discrete set of initial object poses on the conveyor belt that the perception system can perceive.
Finally, we assume that the robot has an initial state \Shome corresponding to the time $t=0$ from which it starts planning to grasp any object.

Roughly speaking, the objective is to enable planning and replanning to any goal pose $ g \in \Gfull$ in bounded time~\Tbound regardless of the robot's current state.
%
%
%
%
Specifically we aim to develop a CTMP-complete algorithm, so that 
for any state $s$ that the system can be in 
and every reachable goal pose $g \in \Gfull$ from $s$ updated by the perception system,
$g$ is covered by $s$.

\ignore{
We are now ready to state the assumptions for which we can solve the problem defined.

\begin{enumerate}[label={\textbf{A\arabic*}},leftmargin=0.75cm]
    
\ignore{
    \item \label{assum:2} Given a path 
    $\Pi = \{s_0, \ldots, s_k \}$ 
    s.t. $s_0 = \Shome$ and $s_k \in \Gfull$, 
    we have that $G^{\rm reach}(s_{i+1}) \subset G^{\rm reach}(s_{i})$.
    Namely, the reachable set of goals for a state on the path is a subset of the reachable set of every other state on that path that exists before it.
    \os{Oren - Didn't we agree that this is not an assumption but a property?}
}

    \item \label{assum:4} There exists a replan cutoff time $t=\Trc$, after which the planner does not replan and continues to execute the last planned path.

    \item \label{assum:5} For any time $0 \leq t \leq \Trc$, the environment is static. Namely, objects on the conveyor belt cannot collide with the robot during that time.

    \item \label{assum:3} The pose estimation error of the perception system is bounded by a distance~$\varepsilon$.
        
\end{enumerate}
Assumptions~\ref{assum:4}-\ref{assum:5} enforce a requirement that the perception system must converge on an accurate estimate $g$ before \Trc, and until \Trc, $o$ is guaranteed to be at a safe distance from the robot.
Assumption~\ref{assum:3} bounds the maximum error of the perception system and is explained in detail in Sec.~\ref{sec:eval} and in Fig.~\ref{fig:pe}

}
We assume that the pose estimation error of the perception system is bounded by a distance~$\varepsilon$.
We also specify a replan cutoff time $t=\Trc$ after which the planner does not replan and continues to execute the last planned path until the goal is reached. The time~$\Trc$ could be chosen based on when the perception system is expected to send an accurate pose estimate, and therefore no replanning is needed from there on.

\section{Algorithmic framework}
\label{subsec:strawman}
Our approach for constant-time planning relies on a \emph{preprocessing} stage that allows to efficiently compute paths in a \emph{query} stage to any goal.
%
Before we describe our approach, we start by describing a \naive method that solves the aforementioned problem but requires a prohibitive amount of memory.
This can be seen as a warmup before describing our algorithm which exhibits the same traits but does so in a memory-efficient manner.

\subsection{Straw man approach}
\begin{figure*}[t]
    \centering
    \begin{subfigure}{.35\textwidth}
        \includegraphics[width=0.9\textwidth]{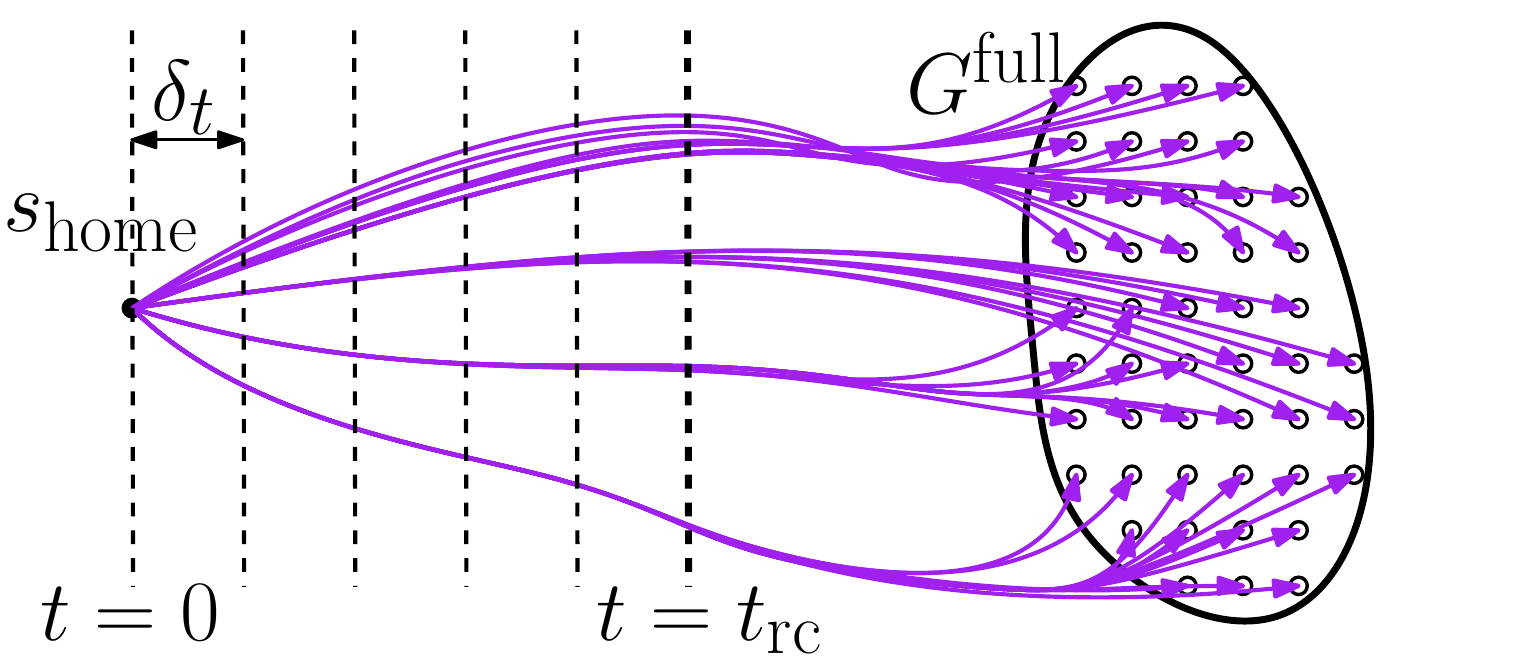}
        \caption{}
        \label{fig:naive1}
    \end{subfigure}
    \begin{subfigure}{0.35\textwidth}
        \includegraphics[width=0.9\textwidth]{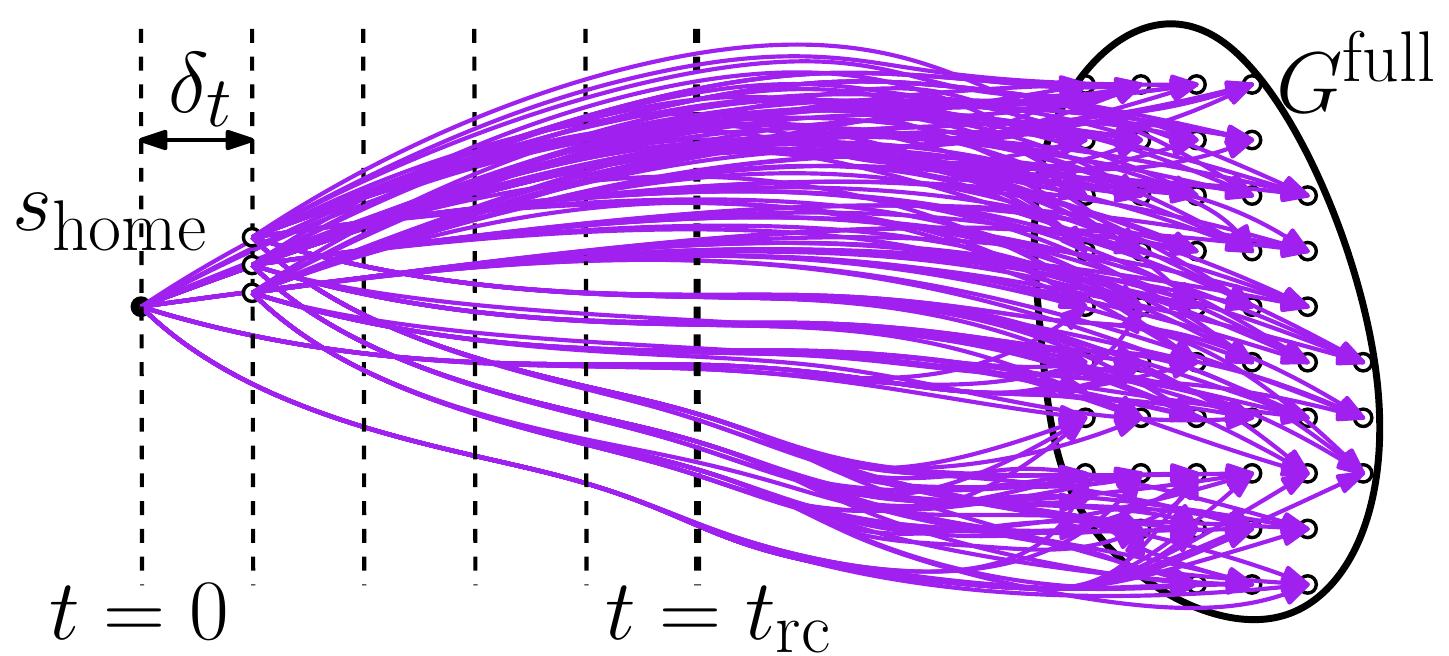}
        \caption{}
        \label{fig:naive2}
    \end{subfigure}
    \caption{
    \CaptionTextSize
    The figures show paths discretized from timesteps $t_0$ to~$\Trc$ with steps of size $\delta_t$.
    (\subref{fig:naive1})~At $t_0$, the algorithm computes~$n_{\rm goal}$ paths, from~\Shome to every $g \in \Gfull$.
    (\subref{fig:naive2})~At $t_1 = \delta_t$, the algorithm computes~$n_{\rm goal}^2$ paths, from all $n_{\rm goal}$ replanable states at $t_1$ to every $g \in \Gfull$ (here we only show paths from three states).
    Thus, the number of paths increases exponentially at every timestep.
    }
    \label{fig:naive}
\end{figure*}

We first compute from \Shome a path $\pi_g$ to every reachable $g \in \Gfull$. 
These paths can be stored in a lookup (hash) table which can be queried in constant time \Tconst (assuming perfect hashing~\cite{czech1997perfect}).
For the straw man approach, since there is no utility of having a \Tbound larger than \Tconst as it only performs look up operations at query time, we have $\Tbound = \Tconst$.
By storing paths to all the goals, every goal is covered by \Shome and this allows us to start executing a path once the perception system gives its initial pose estimate.
However, we need to account for pose update while executing~$\pi_g$. 
%
This only needs to be done up until time~\Trc, since no future improved estimates are expected from the perception system.
Thus, we discretize each path uniformly with resolution $\delta_t$.
%
We call all states that are less than \Trc time from \Shome \emph{replanable states}.

Next, for every replanable state along each path $\pi_g$, we compute a new path to all goals. 
%
This will ensure that all goals are covered by all replanable states. Namely, it will allow to immediately start executing a new path once the goal location is updated by the perception system.
Unfortunately, the perception system may update the goal location more than once. Thus, this process needs to be performed recursively for the new paths as well.

The outcome of the preprocessing stage is a set of precomputed collision-free paths starting at states that are at most $\Trc$ from \Shome and end at goal states.
The paths are stored in a lookup table $\calM: S \times \Gfull \rightarrow \{ \pi_1, \pi_2, \ldots \}$ that can be queried in $\Tbound=\Tconst$ time to find a path from any given $s \in S$ to $g \in \Gfull$.

In the query stage we obtain an estimation $g_1$ of the goal pose by the perception system. 
The algorithm then retrieves the path~$\pi_1(\Shome,g_1)$ (from~\Shome to~$g_1$) from $\calM$ and the robot starts executing~$\pi_1(\Shome,g_1)$.
For every new estimation $g_i$ of the goal pose obtained from the perception system while the system is executing path $\pi_{i-1}(s_{i-1},g_{i-1})$, the algorithm retrieves from $\calM$ the path $\pi_i(s_i,g_i)$ from the nearest state~$s_i$ along $\pi_{i-1}(s_{i-1},g_{i-1})$ that is least $\Tbound$ away from~$s_{i-1}$. The robot then starts executing~$\pi_i(s_i,g_i)$ once it reaches~$s_i$.
Hence, the straw man algorithm is trivially a CTMP-complete algorithm.

Clearly, every possible goal is covered for every possible configuration that the robot might be in during execution by this brute force approach, however it requires a massive amount of memory and prepreprocessing time.
Let $n_{\rm goal} = \vert \Gfull \vert$ be the number of goals and
$\ell$ be the number of states between \Shome and the state that is \Trc time away.
This approach requires precomputing and storing $O(n_{\rm goal}^\ell)$ paths which is clearly infeasible (see Fig.~\ref{fig:naive}).
In the next sections, we show how we can dramatically reduce the memory footprint of the approach without compromising on the system's capabilities.

\subsection{Algorithmic approach}
While the straw man algorithm presented  allows for planning to any goal pose $ g \in \Gfull$ within \Tconst time, its memory footprint is prohibitively large.
We suggest to reduce the memory footprint by building on the observation that many paths to close-by goals traverse very similar parts of the configurations space. Instead of generating a plan strictly within $\Tconst$ time, our approach trades off preprocessing efforts with the bound on the planning time and guarantees to generate a solution within the user-specified time bound $\Tbound (> \Tconst)$.


The key idea of our approach is that instead of computing (and storing) paths to all reachable goals in \Gfull, we compute a relatively small subset of so-called ``root paths" that can be reused in such a way that we can still cover \Gfull fully. Namely, at query time, we can reuse these paths to plan to any $g\in \Gfull$ within \Tbound. The idea is illustrated in Fig.~\ref{fig:crp}.

First, we compute a set of root paths $\{\Pi_1, \ldots, \Pi_k \}$ from~\Shome to cover~\Gfull by \Shome (here we will have that $k \ll n_{\rm goal})$ 
Next, for all replanabale states along these root paths, the algorithm recursively computes additional root paths so that their reachable goals are also covered.
During this process, additional root paths are computed only when the already existing set of root paths does not provide enough guidance to the search to cover \Gfull i.e., to be able to compute a path to any $g \in \Gfull$ within \Tbound.
%
%
The remainder of this section formalizes these ideas.

\subsection{Algorithmic building blocks}
\label{subsec:alg_building_blocks}
We start by introducing the algorithmic building blocks that we use.
Specifically, we start by describing the motion planner \calP  that is used to compute the root paths 
and then continue to describe how they can be used as \emph{experiences} to efficiently compute paths to other goals.
We implemented two types of motion planners. One operates in a time-configuration space and the other is a kinodynamic motion planner that plans with additional state dimensions of joint velocities in order to satisfy the kinodynamic constraints of the robot. While the former planning framework is simpler and works successfully on a physical robot in our experiments, the latter might be more desirable for operating a robot closer to its maximum performance limits.

We use a heuristic search-based planning approach with motion primitives (see, e.g,~\cite{CCL10,CSCL11,LF09})
as it allows for deterministic planning time which is key in our domain.
Moreover, such planners can easily handle under-defined goals as we have in our setting---we define a goal as a grasp pose for the goal object. The grasp pose for a target object $o$ is manually selected.
\subsubsection{Time-Configuration Motion Planner}
\hfill\\
\textbf{State space and graph construction.}
We define a state~$s$ as a pair $(q,t)$ where $q = (\theta_1, ..., \theta_n)$ is a configuration represented by the joint angles for an $n$-DOF robot arm (in our setting $n=$7) and $t$ is the time associated with $q$.
Given a state~$s$ we define two types of motion primitives which are short kinodynamically-feasible motions that the robot can execute.
%

The first type of motion primitives are predefined primitives. These are small individual joint movements in either direction as well as \emph{wait} actions.
These primitives have non-uniform resolution. For each joint, we define two  motion primitives of distance $\pm$4$^{\circ}$. In addition, for the first four of the seven robot joints we define  two additional  primitives each, of distance $\pm$7$^{\circ}$. We only allow moving one joint at a time which makes it a total of 23 predefined primitives.
For each motion primitive, we compute its duration by using a nominal constant velocity profile for the  joint that is moved.
%

The second type of primitives are dynamic primitives. They are generated by the search only at the states that represent the arm configurations where the end effector is close to the object. These primitives correspond to the actual grasping of the object while it is moving.
The dynamic primitives are generated by using a Jacobian pseudo inverse-based control law similar to what~\cite{menon2014motion} used. 
The desired velocity of the end effector is computed for which the end-effector minimizes the distance to the grasp pose. Once the gripper encloses the object, it moves along with the object until the gripper is closed. Some examples of the dynamic primitives are shown in Fig.~\ref{fig:amp}. During the search, all motion primitives are checked for validity with respect to collision and joint limits.

\begin{figure}[t]
    \centering
        \includegraphics[width=0.24\textwidth]{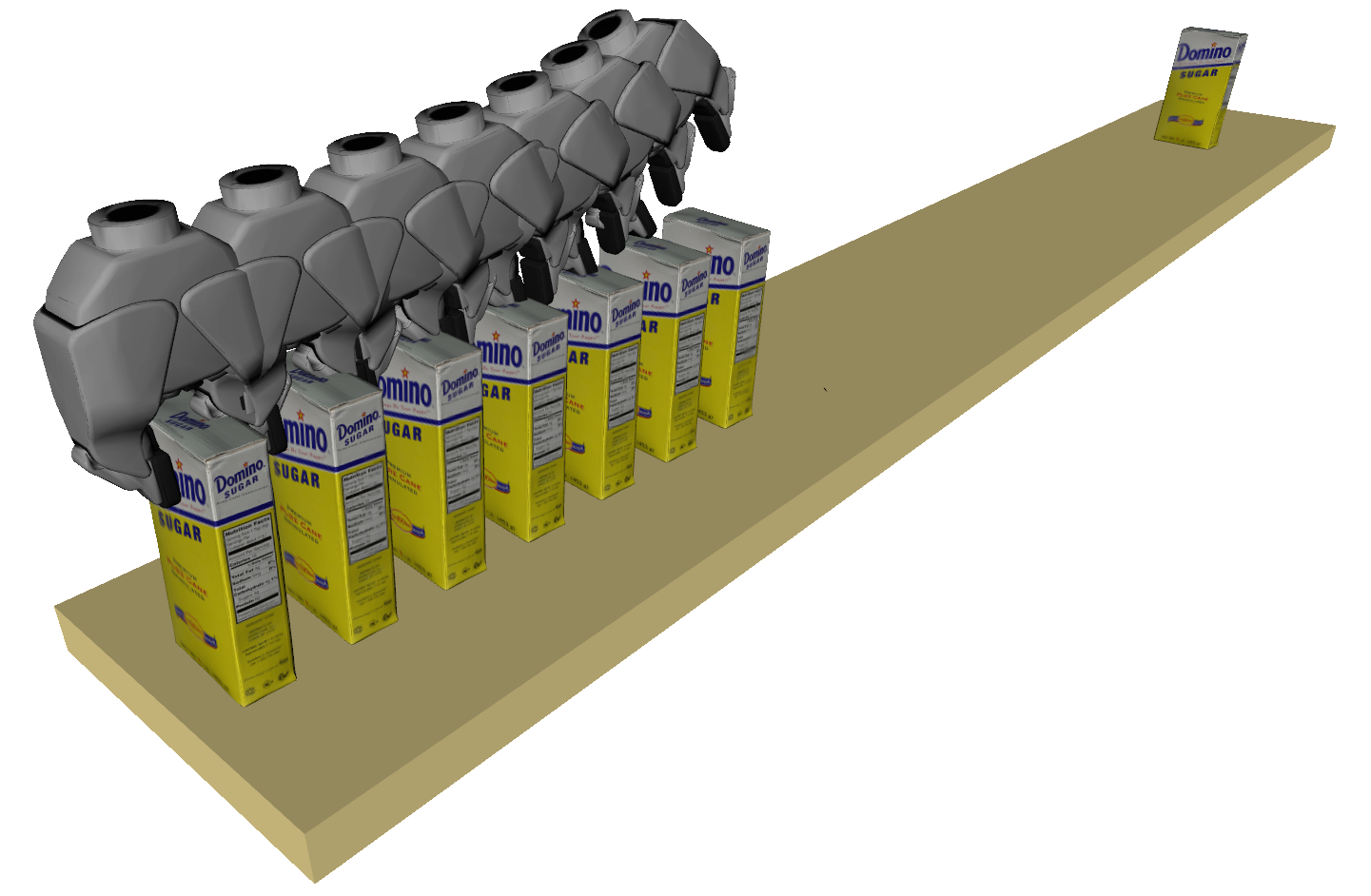}
        \includegraphics[width=0.24\textwidth]{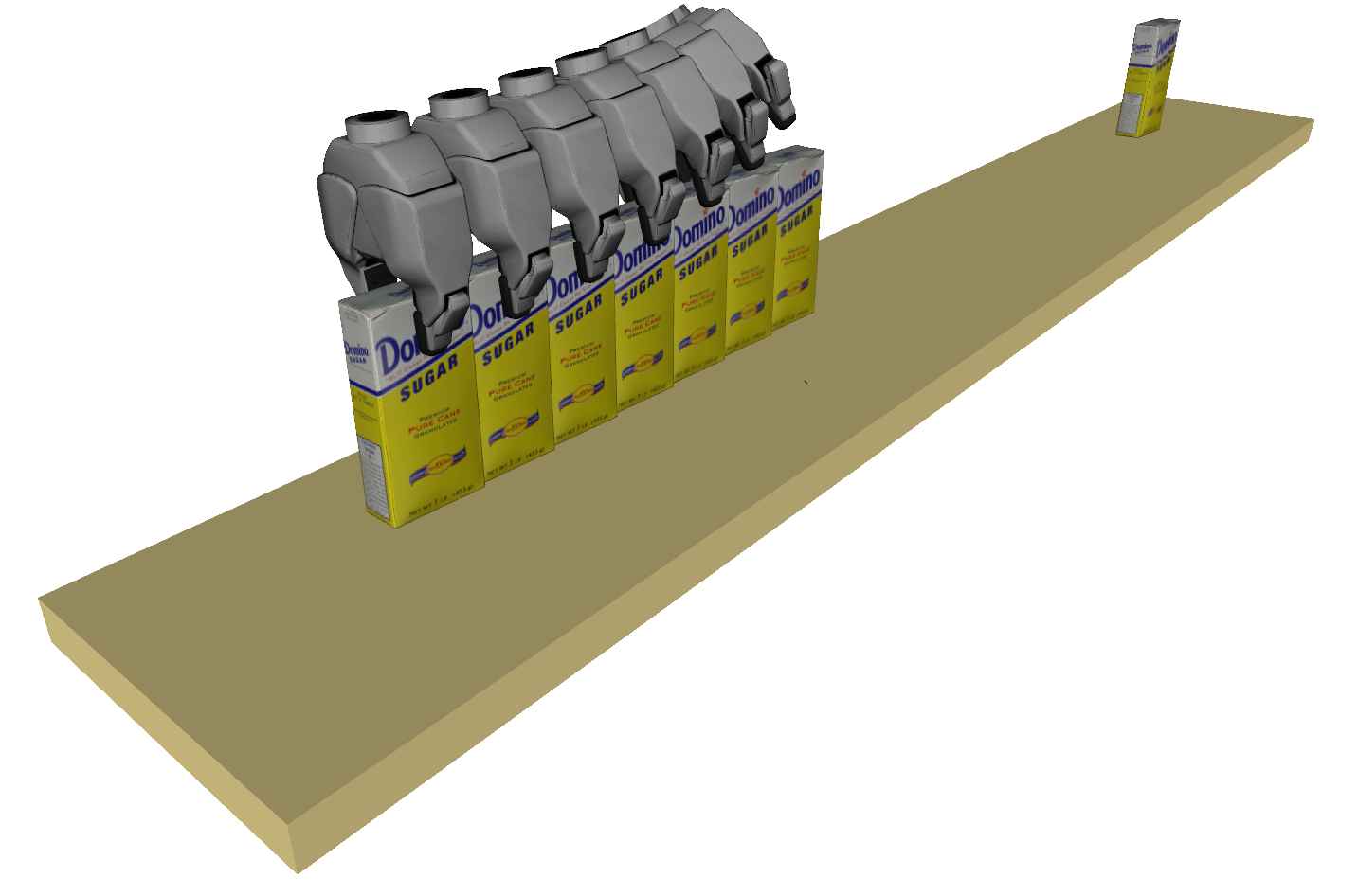}
    \caption{
    The figures show the dynamic motion primitive for two different initial poses of the sugar box. (Only the gripper poses are visualized.) The sugar box shown at the rear of the conveyor belt in the examples depicts its initial pose.
    }
    \label{fig:amp}
\end{figure}

\textbf{Heuristic Function}
The search is guided by an efficient and fast-to-compute heuristic function which in our case has two components.
The first component drives the search to intercept the object at the right time and 
the second component guides the search to correct the orientation of the end effector as it approaches the object. 
Mathematically, our heuristic function is given by
\begin{equation} \label{eq:h1}
 h(s,g) = \max (\Delta t(s,g), \lambda \cdot \textsc{AngleDiff}(s,g)).
\end{equation}
Here, $\Delta t(s,g)$ is the expected time to intercept the object which can be analytically computed from the positions of the target object and the end-effector, the velocity of the target object and the speed of the end-effector. We pick a nominal speed for the end-effector to solve the problem. \textsc{AngleDiff}($s,g$) gives the magnitude of angular difference between the end-effector's current pose and the grasp pose. The coefficient $\lambda$ is used as a weighting factor.

\subsubsection{Kinodynamic Motion Planner}
\hfill\\
The time-configuration motion planner does not guarantee that the plan when transformed into a trajectory will satisfy robot's torque limits. To this end, we extend the previous motion planner framework to be able to plan within the kinodynamic constraints of the robot, while still being able to handle the conveyor speed of 0.2$m/s$. In this section we describe the modified state space and the heuristic function for the kinodynamic motion planner.

\textbf{State space and graph construction.}
We modify the state space to include joint velocities. A state $s$ is a tuple $(q,\dot{q},t)$ where $\dot{q} = (\dot{\theta_1},..., \dot{\theta_n})$ represent velocities for each joint. The dimensionality of the state space hence becomes 15 in our experiments. Similar to $q$, the velocities $\dot{q}$ are also discretized.

We modify the set of predefined motion primitives. The primitives are specified as magnitudes of accelerations for each joint. Specifically, to generate a successor of a state, for the desired accelerations (specified in the primitive), first we use inverse dynamics to compute the required torques. Second, we integrate using the forward dynamics for a fixed time step $dt$ to get the resulting state using Runge-Kutta integration. We use the Orocos Kinematics and Dynamics Library (KDL) to solve for forward/inverse dynamics and the integration\footnote{\url{KDL: https://orocos.org/kdl.html}.}.
Let a motion primitive be specified by a vector of accelerations $\ddot{q}$ of size $n$. The two steps to compute the primitive are

\begin{gather*}
\tau = \textsc{ComputeInverseDynamics}(s,\ddot{q}) \\
s' = \textsc{Integrate}(s, \tau, dt)
\end{gather*}

Since the robot dynamics is a function of the state $s$ of the robot, these primitives need to be computed on-the-fly during search. In addition to performing kinematic and collision checks to verify motion validity, we discard successors for which $\tau$ exceeds the robot's torque or velocity limits.

In our experiments, we use 6 motion primitives for each of the 7 joints. These are accelerations $\pm$(4, 8, 12) $\deg/s^2$. We only accelerate or decelerate one joint at a time under these acceleration profiles, thus resulting in 42 primitives. In addition to these primitives, we use a ``coasting" primitive that assigns zero acceleration to all the joints.

\textbf{Heuristic Function.}
The heuristic function we used for the time-configuration space planner gives no guidance for the velocity profiling which is crucial in the case of kinodynamic planning. We, therefore, modify the heuristic in Eq.~\ref{eq:h1} by introducing an additional term $\Delta \dot{x}$ that guides the search with respect to the velocity profile.

\begin{equation}
\Delta \dot{x}(s) = \|\mathbf{\dot{x^o} - \dot{x^e}}(s)\|.
\end{equation}

Namely, $\Delta \dot{x}$ is the magnitude of the difference of the target object's velocity $\mathbf{\dot{x^o}}$ and the robot end-effector's velocity $\mathbf{\dot{x^e}}$ at state $s$ in 3D. $\mathbf{\dot{x^e}}$ is computed using forward velocity kinematics.

The new heuristic function is given by

\begin{multline}
 h(s,g) = \max (\Delta t(s,g), \\
 \lambda_1 \cdot \textsc{AngleDiff}(s,g) + \lambda_2 \cdot \Delta \dot{x}(s)).
\end{multline}

where $\lambda_1$ and $\lambda_2$ are the weighting factor. Intuitively, this additional term guides the search to match the end-effector velocity with $o$'s velocity as the end-effector approaches the object. This increases the likelihood of generating a dynamic primitive that satisfies the kinodynamic constraints of the robot. 

\subsubsection{Graph Search}
The states and the transitions implicitly define a graph $\calG = (S,E)$ where $S$ is the set of all states and~$E$ is the set of all transitions defined by the motion primitives. We use Weighted A* (wA*)~\cite{pohl1970heuristic} to find a path in $\calG$ from a given state~$s$ to a goal $g$. 
wA* is a suboptimal heursitic search algorithm that allows a tradeoff between optimality and greediness by inflating the heuristic function $h$ by a given weight~$w$. The cost of an edge is the time of its traversal.

\subsubsection{Planning with Experience Reuse}
We now show how previously-computed paths which we named as root paths can be reused as experiences in our framework. Given a heuristic function $h$, we define for a root path $\Pi$ and a goal $g \in \Gfull$ the \emph{shortcut} state $\Ssc (\Pi,g)$ as the state on the path $\Pi$ that is closest to~$g$ with respect~$h$.
Namely,
\begin{equation}
\Ssc (\Pi,g) := \argmin\limits_{s_i \in \Pi} h(s_i, g).
\end{equation}
Now, when searching for a path to a goal $g \in \Gfull$ using root path $\Pi$ as an experience, we add $\Ssc (\Pi,g)$ as a successor for any state along~$\Pi$
(subject to the constraint that the path along~$\Pi$ to \Ssc is collision free). In this manner the search reuses previous experience to quickly reach a state close to $g$.

\ignore{
\subsubsection{Planning using experiences}
We now show how \emph{experience graphs}~\cite{PCCL12} can be used in our framework.
Roughly speaking, experience graphs allow a planner  to accelerate its planning efforts whenever possible by using previously-computed paths. The planner gracefully degenerates to planning from scratch if no previous planning experiences can be reused.
The key idea is to bias the search efforts, using a specially-constructed heuristic function (called the ``E-graph heuristic''), towards finding a way to get onto the previously-computed paths and to remain on them rather than explore new regions as much as possible. 

In our setting, we use a simplified version of the aforementioned approach which is faster to compute.
The key insight is that in our setting, we always start at \Shome which is the first state on all root paths. Thus, we only need to bias the search to stay on a root path (and we don't need to bias the search efforts to get onto the previously-computed paths).
To this end, given a heuristic function $h$ we define for each root path $\Pi$ and each goal $g \in \Gfull$ the \emph{shortcut} state $\Ssc (\Pi,g)$ as the   state that is closest to~$g$ with respect~$h$.
Namely,
$$
\Ssc (\Pi,g) := \argmin\limits_{s_i \in \Pi} h(s_i, g).
$$
Now, when searching for a path to a goal $g \in \Gfull$ we
(i)~add $\Ssc (\Pi,g)$ as a successor for any state along $\Pi$
(subject to the constraint that the path along $\Pi$ to \Ssc is collision free)
and
(ii)~update our heuristic function to bias the search to use root paths. Specifically, for any state $s$ on the root path $\Pi$ the heuristic is given by
$$
h(s,g) = \min(h(s,\Ssc (\Pi,g)) + h(\Ssc (\Pi,g),g), \varepsilon \cdot h(s,g)).
$$
Here, $\varepsilon>1$ is a penalty term that biases the search to find a path via \Ssc.
}

\subsection{Algorithmic details}
We are finally ready to describe our algorithm describing first the preprocessing phase and then  the query phase.

\subsubsection{Preprocessing}
\begin{figure*}[t]
    \centering
    \begin{subfigure}{.25\textwidth}
        \includegraphics[width=\textwidth]{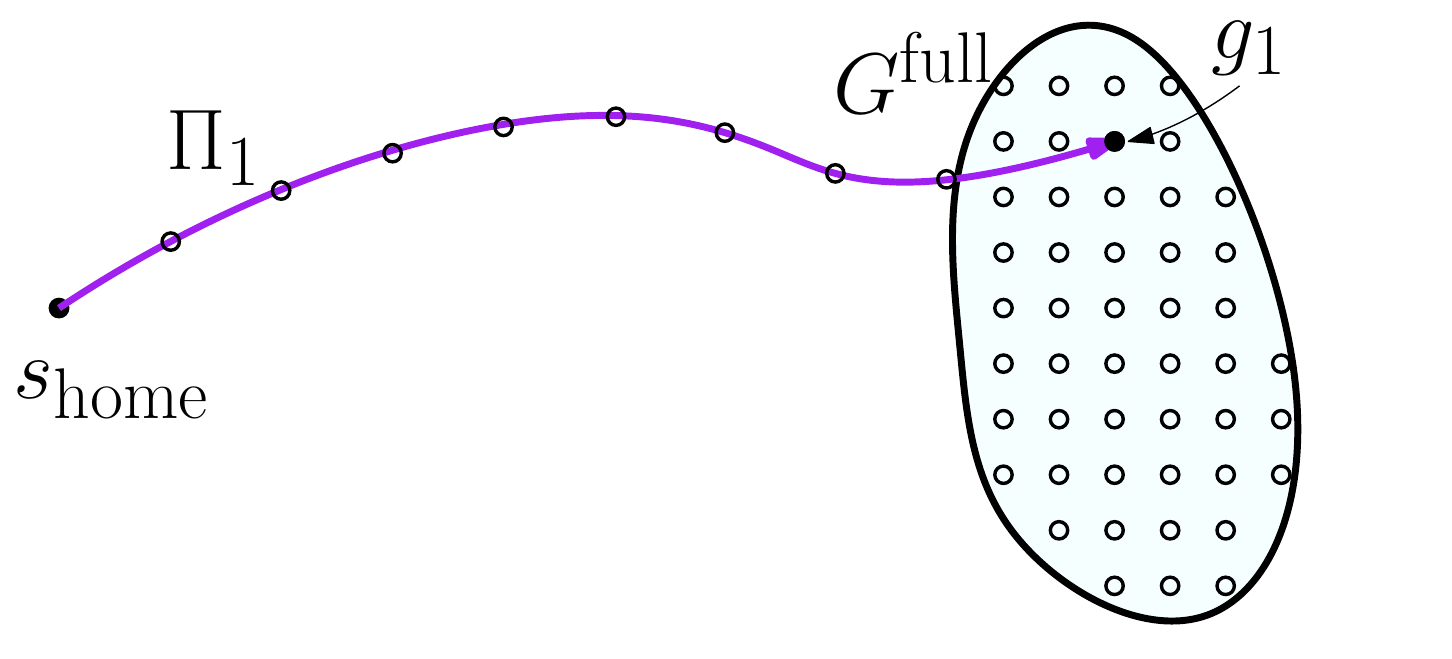}
        \caption{}
        \label{fig:crp1}
    \end{subfigure}
    \hspace{4mm}
    \begin{subfigure}{0.25\textwidth}
        \includegraphics[width=\textwidth]{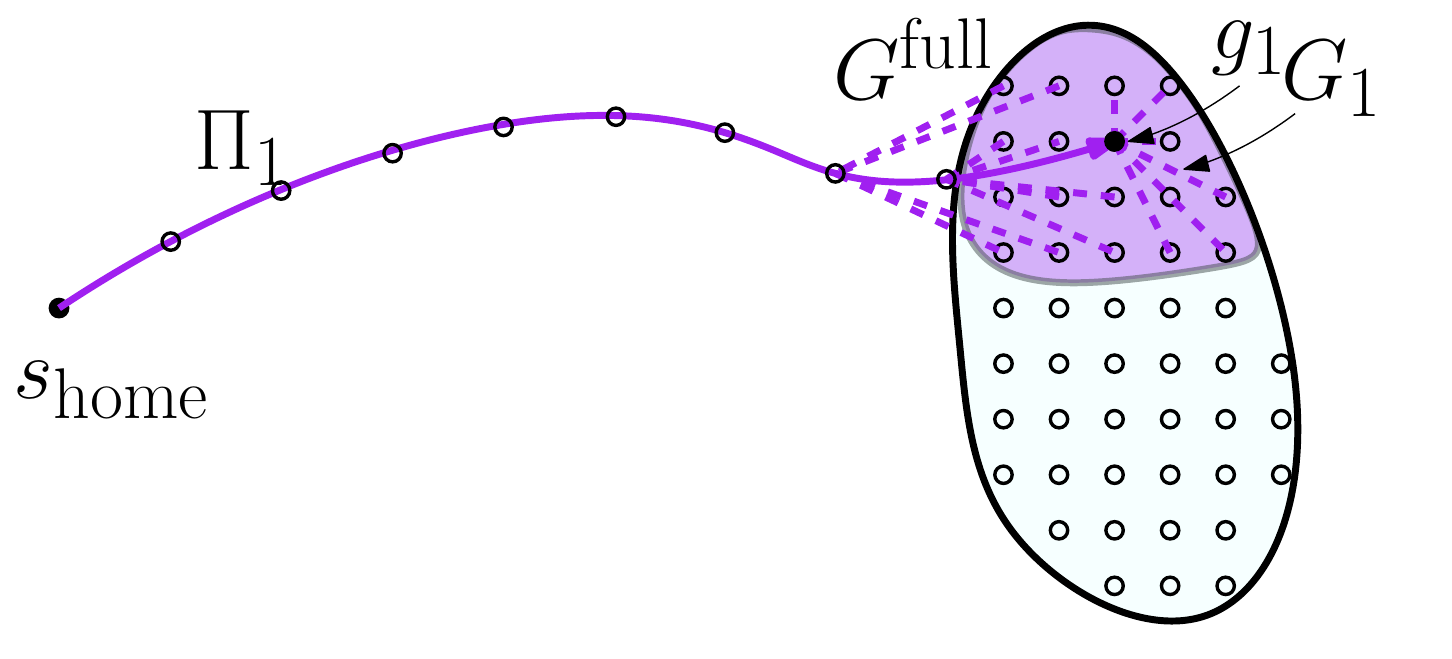}
        \caption{}
        \label{fig:crp2}
    \end{subfigure}
    \hspace{4mm}
    \begin{subfigure}{0.25\textwidth}
        \includegraphics[width=\textwidth]{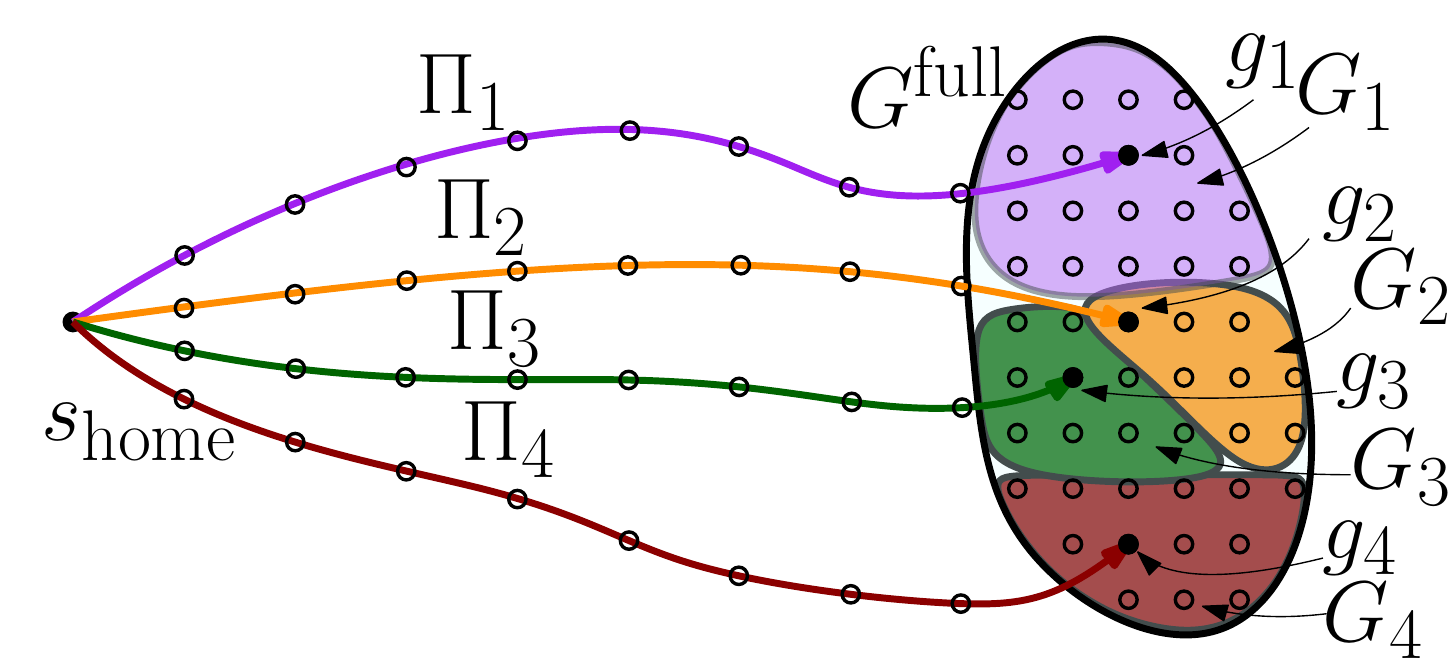}
        \caption{}
        \label{fig:crp3}
    \end{subfigure}
    \caption{\CaptionTextSize
    First step of the preprocessing stage.
    (\subref{fig:crp1})~A goal $g_1$ is sampled and the root path $\Pi_1$ is computed between \Shome and $g_1$.
    (\subref{fig:crp2})~The set $G_1 \subset \Gfull$ of all states that can use $\Pi_1$ as an experience is computed and associated with $\Pi_1$.
    (\subref{fig:crp3})~The goal region covered by four root paths from \Shome after the first step of the preprocessing stage terminates.
    }
    \label{fig:crp}
\end{figure*}

Our preprocessing stage starts by sampling a goal~$g_1 \in \Gfull$ and computing a root path~$\Pi_1$ from~$\Shome$ to~$g_1$. We then associate with~$\Pi_1$ the set of goals ~$G_1 \subset \Gfull$ such that~$\Pi_1$ can be used as an experience in reaching any $g_j \in G_1$ within~\Tbound.\footnote{To account for the lower bound~$\Tconst$ time that is required for the query phase which is consumed in operations, such as hash table lookups etc., the time $\Tconst$ is subtracted from $\Tbound$ for the experience-based planner, to ensure that the overall query time is bounded by $\Tbound$. We use a conservative estimate of $\Tconst$ in our experiments.}
Additionally, the experience-based planner is constrained to reuse the root path upto \Trc.
Thus, all goals in~$G_1$ are covered by~\Shome.
We then repeat this process but instead of sampling  a goal from \Gfull, we sample from $\Gfull \setminus G_1$, thereby removing covered goals from \Gfull in every iteration.
At the end of this step, we obtain a set of root paths. 
Each root path~$\Pi_i$ is associated with a goal set $G_i \subseteq \Gfull$ such that 
(i)~$\Pi_i$ can be used as an experience for planning to any $g_j \in G_i$ in~\Tbound and 
(ii)~$\bigcup_i G_i = \textsc{Reachable}(\Shome, \Gfull)$ (i.e., all reachable goals for \Shome in \Gfull).
Alg.~\ref{alg:step1} details this step (when called with arguments ($\Shome,\Gfull$)). It also returns a set of unreachable goals that are left uncovered.
The process is illustrated in Fig.~\ref{fig:crp}.

\begin{algorithm}[t]
\caption{Plan Root Paths}
\label{alg:step1}
    \AlgFontSize
\begin{algorithmic}[1]
\Procedure{PlanRootPaths}{$s_{\textrm{start}}, \Guncov$}
\State $\Psi_{\Sstart} \leftarrow \emptyset$   \Comment{a list of pairs ($\Pi_i, G_i)$}
\State $\Guncov_{\Sstart} \leftarrow \emptyset$; \hspace{3mm}
       $i = 0$
\While{$\Guncov \neq \emptyset$}
        \Comment{until all reachable goals are covered}
    \State $g_i \leftarrow$\textsc{SampleGoal}($\Guncov$)
    \State $\Guncov \leftarrow \Guncov \setminus \{g_i\}$
    
    \If {$\Pi_i \leftarrow$ \textsc{PlanRootPath}($s_{\textrm{start}}, g_i$)} \Comment{planner succeeded}
        \State $G_i \leftarrow \{ g_i \}$   \Comment{goals reachable}
        \For {\textbf{each} $g_j \in \Guncov$}
            \If {$\pi_j \leftarrow$\textsc{PlanPathWithExperience}($s_{\textrm{start}},g_j,\Pi_i$)}
                \State $G_i \leftarrow G_i \cup \{g_j\}$
                \State $\Guncov \leftarrow \Guncov \setminus \{g_j\}$
            \EndIf
        \EndFor
        \State $\Psi_{\Sstart} \leftarrow \Psi_{\Sstart} \cup \{ (\Pi_i, G_i)\}$; \hspace{3mm}
        $i \leftarrow i + 1$
        
    \Else
        \State $\Guncov_{\Sstart} \leftarrow \Guncov_{\Sstart} \cup \{g_i\}$ \Comment{goals unreachable}
    \EndIf
\EndWhile
\State \textbf{return} $\Psi_{\Sstart}, \Guncov_{\Sstart}$
\EndProcedure
\end{algorithmic}
\end{algorithm}

%
\begin{algorithm}[t]
\caption{Preprocess}\label{alg:preprocess}
    \AlgFontSize
\begin{algorithmic}[1]
\Procedure{TryLatching}{$s,\Psi_{\Shome}\Guncov,\Gcov$}
        \For {\textbf{each} $(\Pi_i, G_i) \in \Psi_{\Shome}$}
        \label{alg:preprocess:line:latch1a}
            \If{\textsc{CanLatch}($s,\Pi_i$)}
                \State $\Guncov \leftarrow \Guncov \setminus G_i$
                \State $\Gcov \leftarrow \Gcov \cup G_i$
                \label{alg:preprocess:line:latch1b}
            \EndIf
        \EndFor
\State \textbf{return} $\Guncov, \Gcov$
\vspace{2mm}
\EndProcedure
\Procedure{Preprocess}{$\Sstart,\Guncov,\Gcov$}
\State $\Psi_{\Sstart}, G^{\textrm{uncov}}_{\Sstart} \leftarrow$ \textsc{PlanRootPaths}($\Sstart,\Guncov$)
{\color{blue}{ \State \textbf{if} $\Sstart = \Shome$ \textbf{then} $\Psi_{\Shome} = \Psi_{\Sstart}$}}


\State $G^{\textrm{cov}}_{\Sstart} \leftarrow 
    \Gcov \cup (\Guncov \setminus G^{\textrm{uncov}}_{\Sstart})$
\If{$t(s_{\textrm{start}}) \leq \Trc$}

\For {\textbf{each} $(\Pi_i, G_i) \in \Psi_{\Sstart}$} \label{loop1}
    \State $G_i^{\textrm{cov}} \leftarrow G_i$;
            \hspace{2mm}
           $G_i^{\textrm{uncov}} \leftarrow G^{\textrm{cov}}_{\Sstart} \setminus G_i$; \label{uncov_init}
            \hspace{2mm}

    \For {\textbf{each} $s \in \Pi_i$ ({from last to first})} \Comment{states up to $\Trc$} \label{loop2}
\textcolor{blue}{
        \State $\Guncov_i,\Gcov_i \leftarrow$ \textsc{TryLatching}($s,\Psi_{\Shome},\Guncov_i,\Gcov_i$)
        \If{$G_i^{\textrm{uncov}} = \emptyset$}
            \State \textbf{break}
        \EndIf
} 
        \State $G_i^{\textrm{uncov}},G_i^{\textrm{cov}} \leftarrow$ \textsc{Preprocess}($s,G_i^{\textrm{uncov}},G_i^{\textrm{cov}}$)    \label{recursion}
        \If{$G_i^{\textrm{uncov}} = \emptyset$} \label{terminate}
            \State \textbf{break}
        \EndIf
 
    \EndFor
\EndFor
\EndIf
\State \textbf{return} $G^{\textrm{uncov}}_{\Sstart}, G^{\textrm{cov}}_{\Sstart}$
\EndProcedure
\end{algorithmic}
\end{algorithm}

%


So far we explained the algorithm for one-time planning when the robot is at \Shome ($t = 0$); we now need to allow for efficient replanning for any state $s$ between $t = 0$ to $\Trc$. In order to do so, we iterate through all the states on these root paths and add additional root paths so that these states also cover their respective reachable goals. This has to be done recursively since newly-added paths generate new states which the robot may have to replan from. The complete process is detailed in Alg.~\ref{alg:preprocess}.
The \textsc{Preprocess} procedure takes in a state \Sstart, the goal region that it has to cover \Guncov and region that it already has covered \Gcov. Initially \textsc{Preprocess} is called with arguments ($\Shome, \Gfull, \emptyset$) and it runs recursively until no state is left with uncovered reachable goals.
%

At a high level, the algorithm iterates through each root path~$\Pi_i$ (loop at line~\ref{loop1}) and for each state $s \in \Pi_i$ (loop at line~\ref{loop2}) the algorithm calls itself recursively (line~\ref{recursion}). The algorithm terminates when all states cover their reachable goals. The pseudocode in blue constitute an additional optimization step which we call ``latching" and is explained later in Sec.~\ref{latching}.

In order to minimise the required computation, the algorithm leverages two key observations:

\begin{enumerate}[label={\textbf{O\arabic*}},leftmargin=0.75cm]
    \item \label{obs:1}
    If a goal is not reachable from a state $s \in \Pi$, it is not reachable from all the states after it on $\Pi$.
    \item \label{obs:2}
    If a goal is covered by a state $s \in \Pi$, it is also covered by all states preceeding it on $\Pi$.
\end{enumerate}

\ref{obs:1} is an assumption that we make about the planner \calP. We use \ref{obs:1} to initialize the uncovered set for any state; instead of attempting to cover the entire \Gfull for each replanable state~$s$, the algorithm only attempts to cover the goals that could be reachable from $s$, thereby saving computation.
%
\ref{obs:2} is used by iterating backwards on each root path (loop at line~\ref{loop2}) and for each state on the root path only considering the goals that are left uncovered by the states that appear on the path after it.

Specifically,~\ref{obs:2} is used to have a single set of uncovered goals~$\Guncov_i$ for all states that appear on $\Pi_i$ instead of having individual sets for each state and the goals that each $s \in \Pi_i$ covers in every iteration of loop~\ref{loop2} are removed from ~$\Guncov_i$.
\ref{obs:1} is used to initialize $\Guncov_i$ (in line~\ref{uncov_init}). Namely, it is initialized not by the entire \Gfull but by the set of goals covered by \Sstart. $G_i$ is excluded since it is already covered via $\Pi_i$. The iteration completes either when all goals in $\Guncov_i$ are covered (line~\ref{terminate}) or the loop backtracks to \Sstart.
The process is illustrated in Fig.~\ref{fig:pl_no_latching}

Thus, as the outcome of the preprocessing stage, a map $\calM: S \times \Gfull \rightarrow \{ \Pi_1, \Pi_2, \ldots \}$ is constructed that can be looked up to find which root path can be used as an experience to plan to a goal $g$ from a state $s$ within \Tbound.

%
%

%

\ignore{
\ref{obs:1} implies that if we can cover a goal $g'$  by some state~$s$ along $\Pi_g$ (with $g \neq g'$), then we cover $g'$ by all states $\Pi$ that occur before~$s$.
\ref{obs:2} implies that if we can compute a path connecting one root path to some other root path, a process we term as ``latching'' on to the new path, then the new root path can be used to reach all its associated goals.

We are now ready to describe the second step of our preprocessing stage.
For every root path $\Pi_i$ we look at the last replanning state $s_{\Pi_i, \Trc}$ (namely, the state that is $t=\Trc$ time from \Shome). For every other root path $\Pi_j$, we test if the motion (a dynamically generated primitive) connecting $s_{\Pi_i, \Trc}$ to $s_{\Pi_j, \Trc + \delta_t}$ (the state on $\Pi_j$ that is $\Trc+\delta_t$ away from \Shome) is valid (i.e. is collision free and reachable in time). 
If this is the case then all goals in $G_j$ are covered by all replanning states along~$\Pi_i$
See Alg.~\ref{alg:preprocess} lines~\ref{alg:preprocess:line:latch1a}-\ref{alg:preprocess:line:latch1a}.

Let $\Guncov(\Trc)$ be all the states that are still uncovered after the above process. We recursively apply our algorithm to the setting where the start state is $s_{\Pi_i, \Trc}$ and the goal region is~$\Guncov(\Trc)$.
If all states are covered after this step, we terminate. 
If not, let $\Guncovp(\Trc)$ be all the states still uncovered.
We consider the state $s_{\Pi_i, \Trc-\delta_t}$ (the state on $\Pi_j$ that is $\Trc-\delta_t$ away from \Shome) and recursively run our algorithm where the start state is $s_{\Pi_i, \Trc-\delta_t}$ and the goal region is $\Guncovp(\Trc)$.
This process is repeated until either all states are covered or we backtracked to $s_{\Pi_i, 0}$ i.e., the first state on $s_{\Pi_i}$.
See Fig.~\ref{fig:pl} and  Alg.~\ref{alg:all} and~\ref{alg:preprocess}, respectively.
}

\begin{figure*}[t]
    \centering
    \begin{subfigure}{.245\textwidth}
        \includegraphics[width=\textwidth]{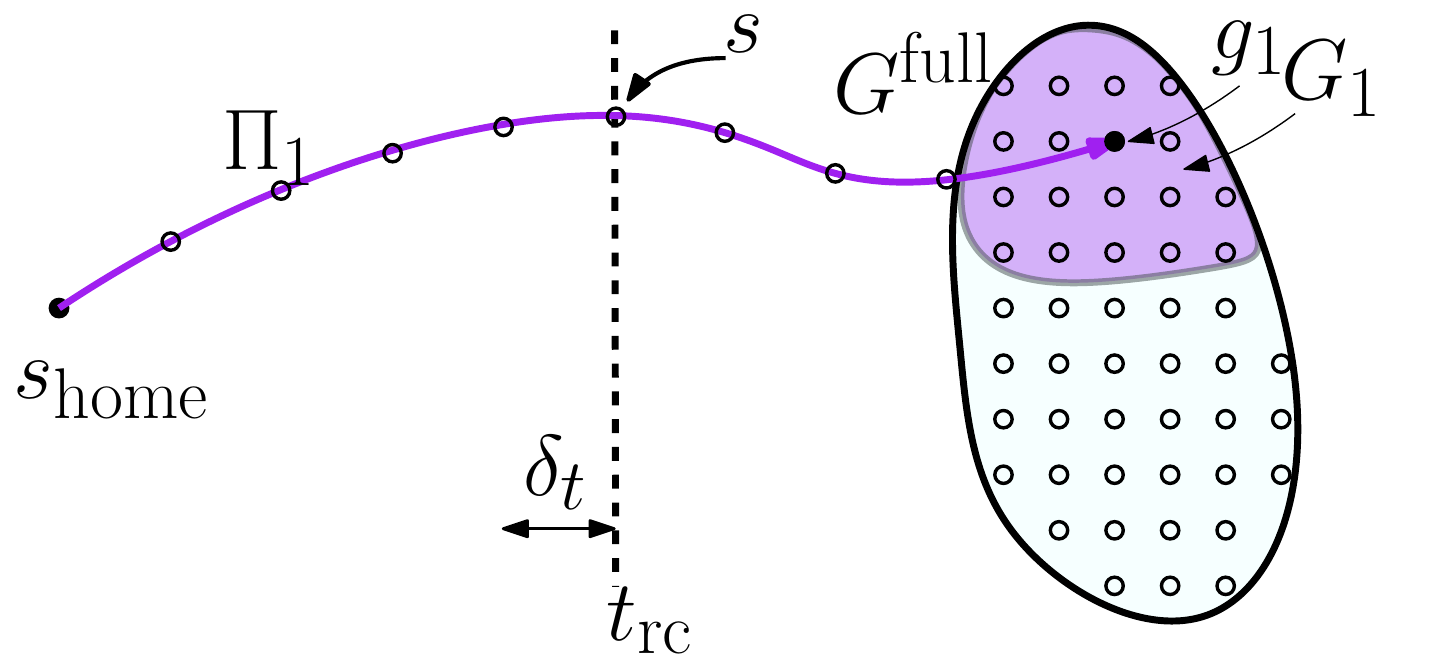}
        \caption{}
        \label{fig:pre1}
    \end{subfigure}
    \begin{subfigure}{.245\textwidth}
        \includegraphics[width=\textwidth]{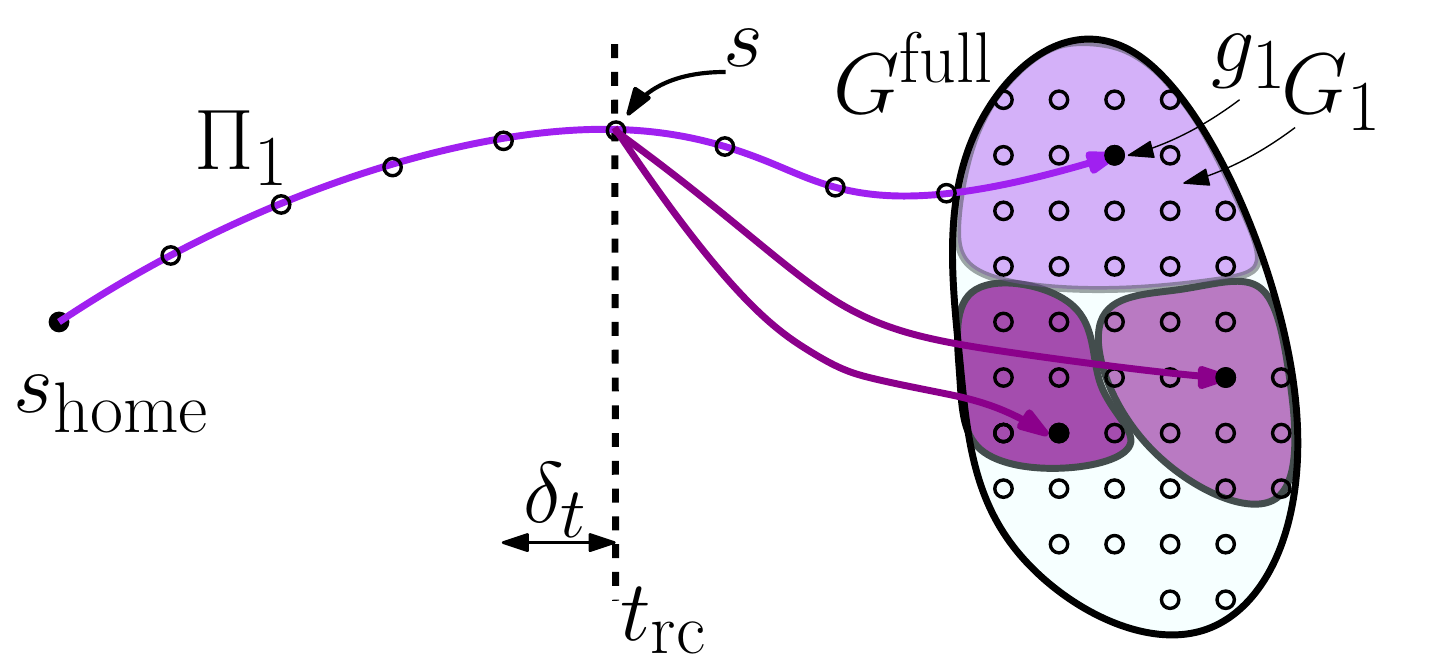}
        \caption{}
        \label{fig:pre2}
    \end{subfigure} 
    \begin{subfigure}{.245\textwidth}
        \includegraphics[width=\textwidth]{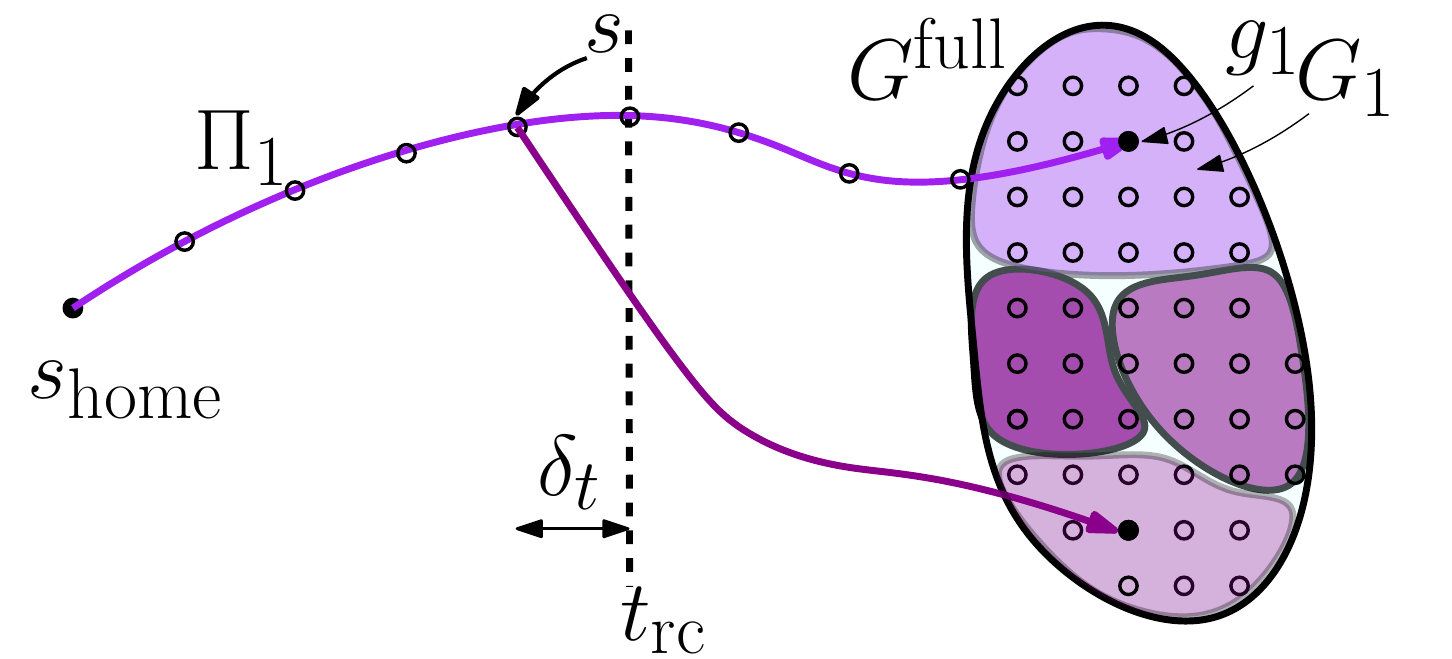}
        \caption{}
        \label{fig:pre3}
    \end{subfigure}
    %
    \begin{subfigure}{.245\textwidth}
        \includegraphics[width=\textwidth]{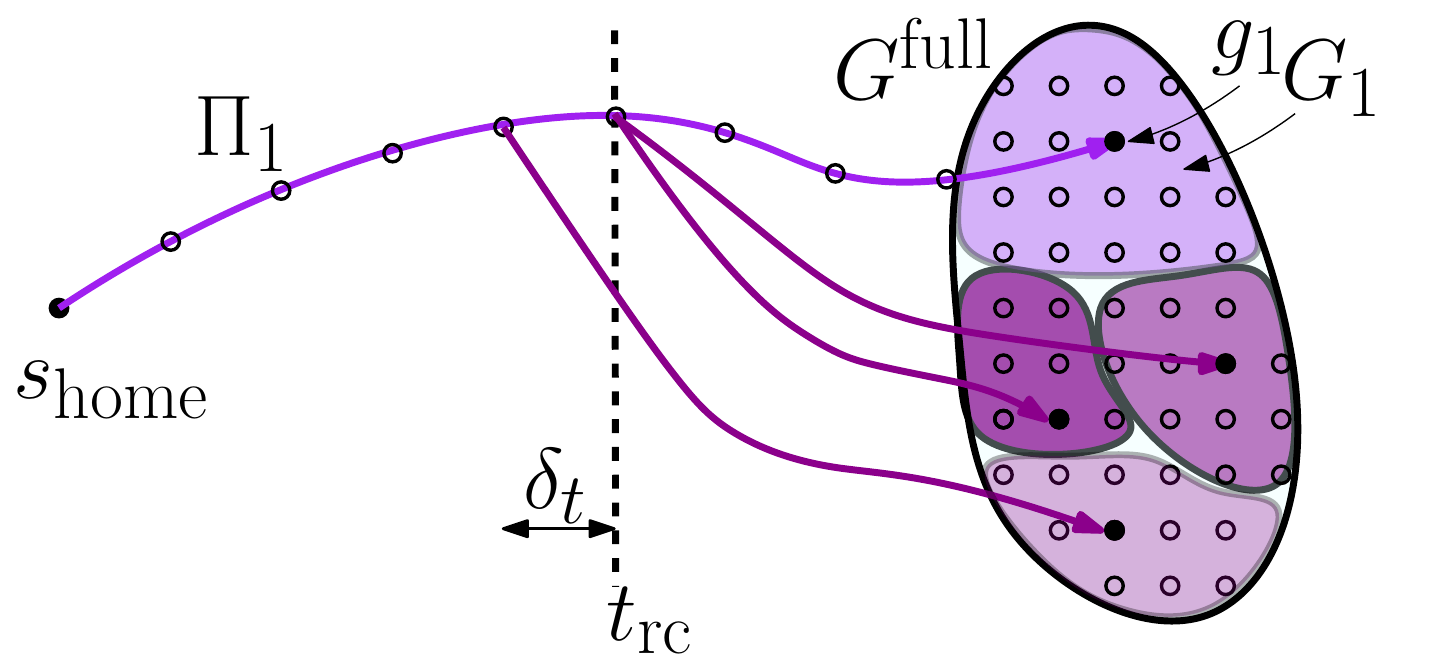}
        \caption{}
        \label{fig:pre4}
    \end{subfigure}
    \caption{\CaptionTextSize
    Preprocess loop for $\Pi_1$ without latching.
    (\subref{fig:pre1})~Initially the state $s$ covers $G_1$ via $\Pi_1$. 
    (\subref{fig:pre2})~New root paths are computed from $s$ to cover remaining uncovered region.
    (\subref{fig:pre3})~This process is repeated by backtracking along the root path.
    (\subref{fig:pre4})~Outcome of a preprocessing step for one path: \Gfull is covered either by using $\Pi_1$ as an experience or by 
    using newly-computed root paths. 
    }
    \label{fig:pl_no_latching}
\end{figure*}

\begin{figure*}[t]
    \centering
    \begin{subfigure}{.245\textwidth}
        \includegraphics[width=\textwidth]{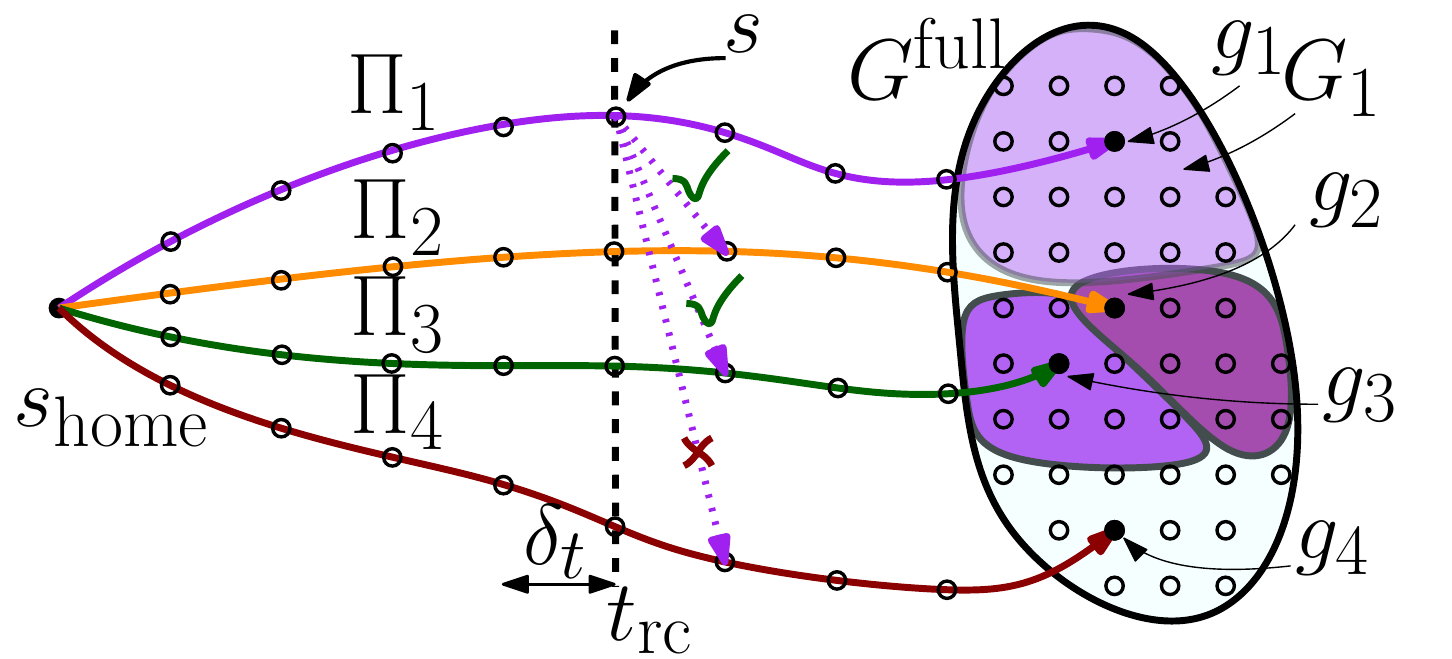}
        \caption{}
        \label{fig:pl1}
    \end{subfigure}
    \begin{subfigure}{.245\textwidth}
        \includegraphics[width=\textwidth]{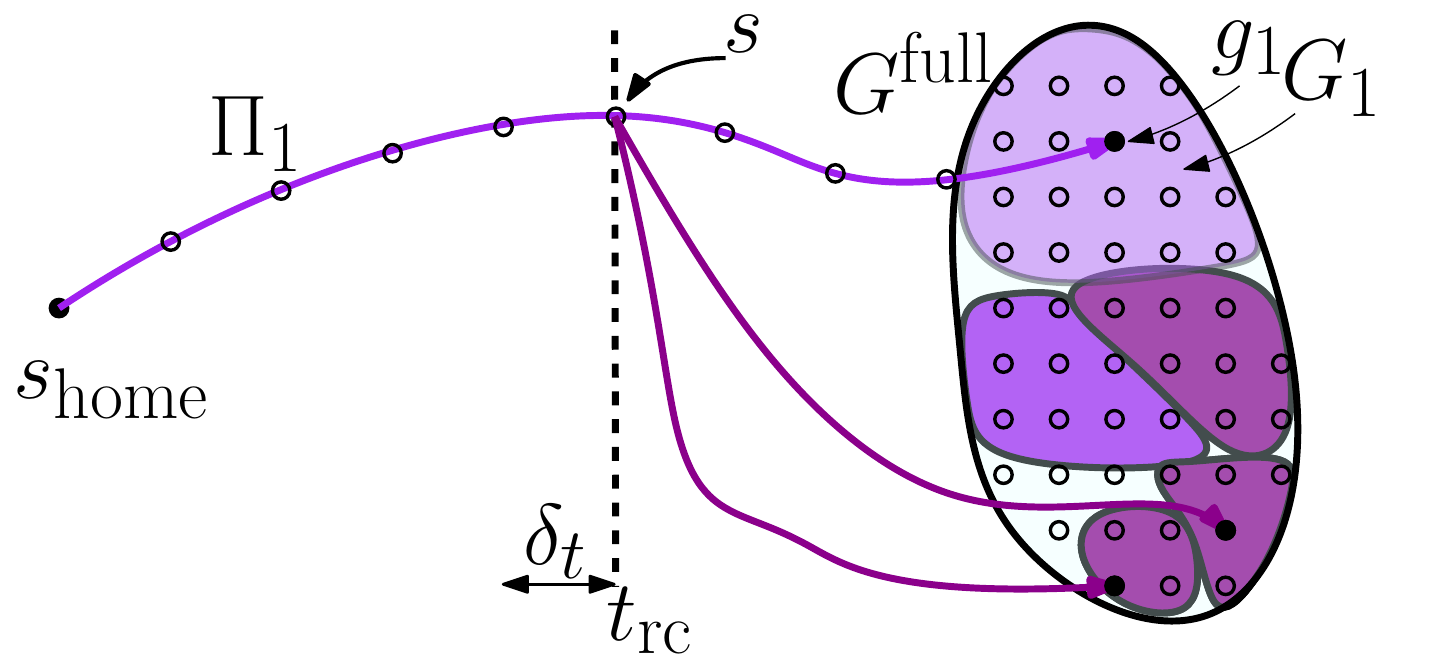}
        \caption{}
        \label{fig:pl2}
    \end{subfigure} 
    \begin{subfigure}{.245\textwidth}
        \includegraphics[width=\textwidth]{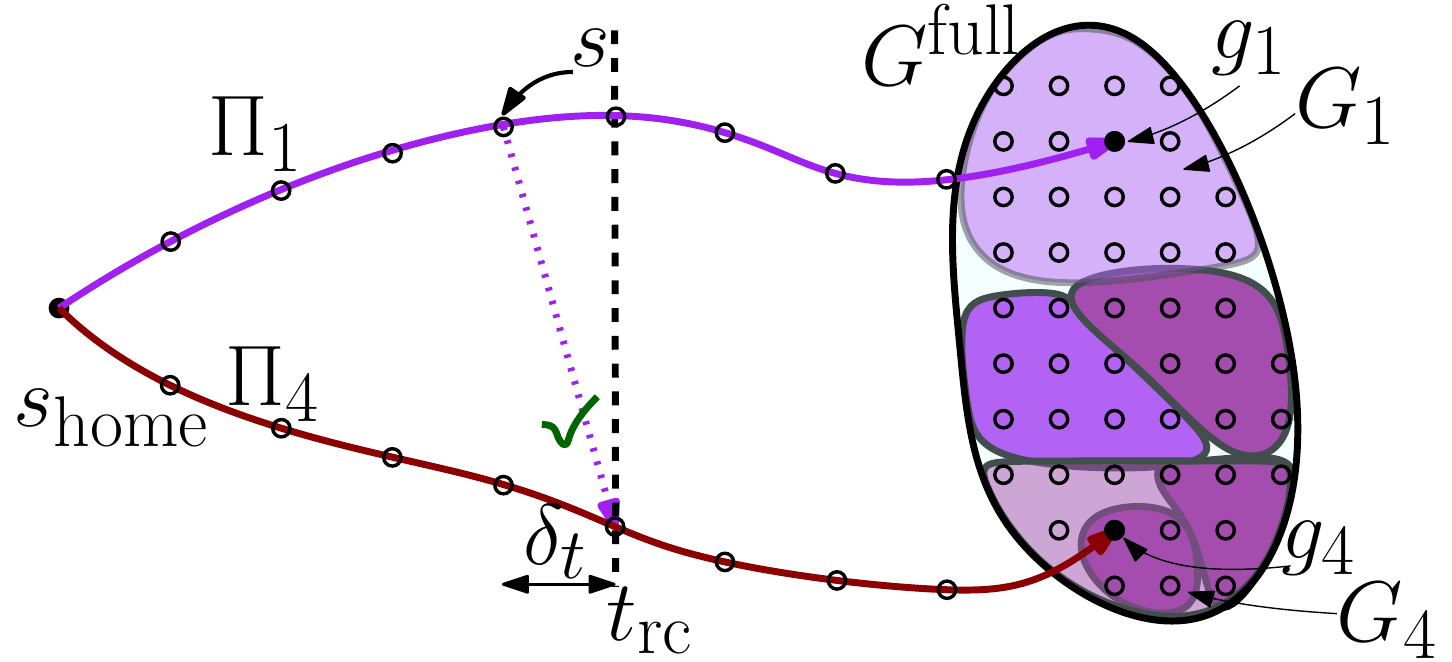}
        \caption{}
        \label{fig:pl3}
    \end{subfigure}
    %
    \begin{subfigure}{.245\textwidth}
        \includegraphics[width=\textwidth]{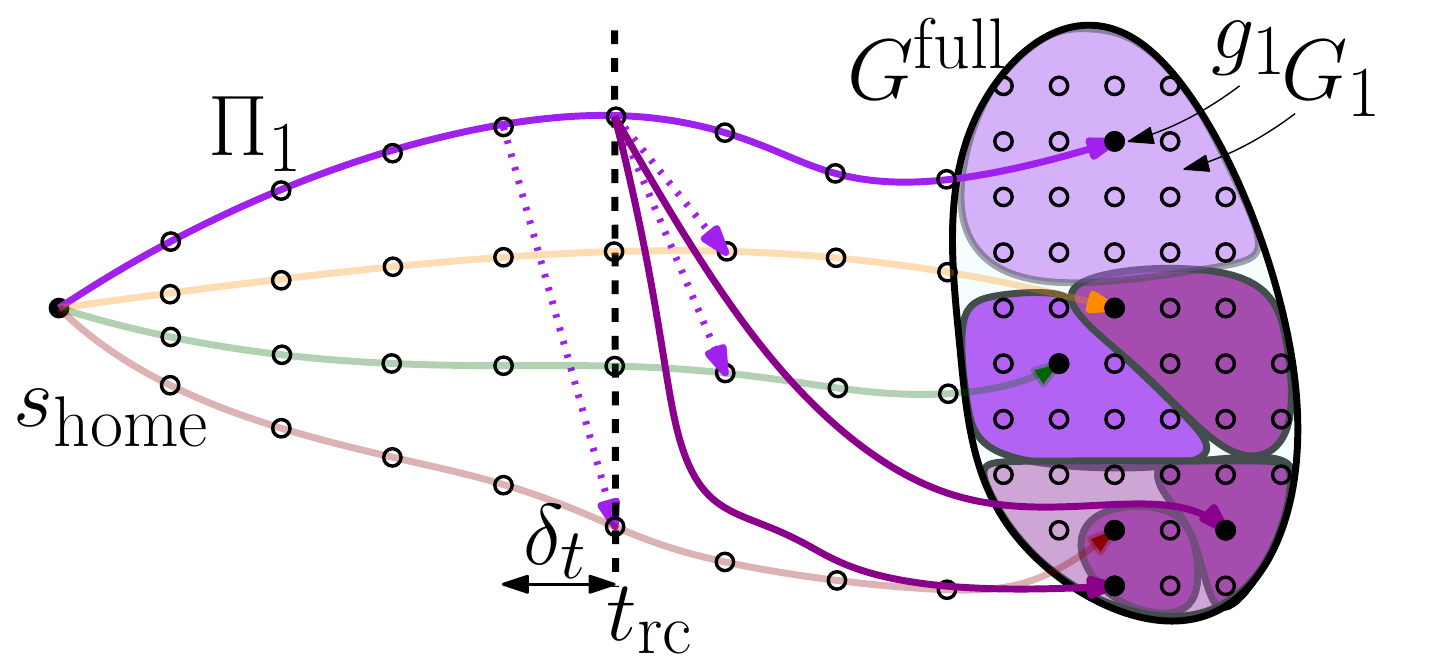}
        \caption{}
        \label{fig:pl4}
    \end{subfigure}
    \caption{\CaptionTextSize
    Preprocess loop for $\Pi_1$ with latching.
    (\subref{fig:pl1})~The algorithm starts by trying to latch on to every other root path; for successful latches, the corresponding goals are removed from uncovered region.
    (\subref{fig:pl2})~New root paths are computed from $s$ to cover remaining uncovered region.
    (\subref{fig:pl3})~This process is repeated by backtracking along the root path.
    (\subref{fig:pl4})~Outcome of a preprocessing step: \Gfull is covered either by using $\Pi_1$ as an experience, 
    latching on to $\Pi_2,\Pi_3$ or  $\Pi_4$ (at different time steps)
    or by 
    using newly-computed root paths. 
    }
    \label{fig:pl_latching}
\end{figure*}

\subsubsection{Query}
Alg.~\ref{alg:query} describes the query phase of our algorithm. Again, the lines in blue correspond to the blue pseudocode in Alg.~\ref{alg:preprocess} for the additional optimization step which is explained in Sec.~\ref{latching}.
Assume that the robot was at a state~$s_{\textrm{curr}}$ while executing a path~$\pi_{\rm curr}$ when it receives a pose update~$g$ from the perception system. Alg.~\ref{alg:query} will be called for the first state \Sstart that is \Tbound ahead of~$s_{\textrm{curr}}$ along $\pi_{\rm curr}$, allowing the algorithm to return a plan before the robot reaches \Sstart.

Alg.~\ref{alg:preprocess} assures that either the first state on ~$\pi_{\rm curr}$ covers $g$ or there exists one state on~$\pi_{\rm curr}$ between~\Sstart and the state at \Trc that covers $g$. The algorithm first checks if the first state on ~$\pi_{\rm curr}$ covers $g$ or not. If it covers $g$ then the corresponding root path is used to find the new path ~$\pi_{\rm new}$. Otherwise, it iterates over each~$s \in \pi_{\rm curr}$ backwards (similar to Alg.~\ref{alg:preprocess}) between~\Sstart and the state at \Trc and finds the one that covers $g$ by quering~$\calM$. Once found, the corresponding root path~$\Pi_{\rm next}$ is used as an experience to plan the path $\pi_{\rm next}$ from $s$ to $g$. Finally the paths $\pi_{\rm curr}$ and $\pi_{\rm next}$ are merged together with $s$ being the transitioning state to return the final path $\pi$. 

\ignore{
In the query stage, given an initial estimation $g_{\rm init}$ of the goal pose from the perception system, our algorithm queries $\calM$ to obtain the root path $\Pi$ from $\Shome$ to $g_{\rm init}$.
It then plans a path using $\Pi$ as an experience and starts executing this path.
Now, assume that the robot is executing path $\Pi_{\rm curr}$ and a new estimation $g_{\rm new}$ of the goal pose is provided by the perception system.
We consider the state $s_{\Pi_{\rm curr}, \Trc}$ along the path $\Pi_{\rm curr}$ at time $\Trc$ and test if
(i)~we can reuse the root path that the robot is currently executing as an experience to reach the new goal
(Alg.~\ref{alg:query}, lines~\ref{alg:query:line:c1a}-\ref{alg:query:line:c1b}),
(ii)~there is a root path that can be used as an experience to reach $g_{\rm new}$ starting at $s_{\Pi_{\rm curr}, \Trc}$ 
(Alg.~\ref{alg:query}, lines~\ref{alg:query:line:c2a}-\ref{alg:query:line:c2b})
or if 
(iii)~there is a root path starting at \Shome associated with~$g_{\rm new}$ that we can latch on to.
If the former holds, we run our experience-based planner to obtain a path $\Pi(s_{\Pi_{\rm curr}}, \Trc, g_{\rm new})$ starting at $s_{\Pi_{\rm curr}}$ and ending at $g_{\rm new}$. We then execute $\Pi_{\rm curr}$ until $s_{\Pi_{\rm curr}, \Trc}$ and then continue by executing $\Pi(s_{\Pi_{\rm curr}}, \Trc, g_{\rm new})$.
If the latter holds, we run our experience-based planner to obtain a path $\Pi(s', \Trc + \delta_t), g_{\rm new})$ starting at $s'$, the state on the path we latched on to and ending at $g_{\rm new}$. We then execute $\Pi_{\rm curr}$ until $s_{\Pi_{\rm curr}, \Trc}$ and then continue by executing the motion that latches on to $s'$ and finally executing  $\Pi(s', \Trc + \delta_t, g_{\rm new})$.
See Alg.~\ref{alg:query}, lines~\ref{alg:query:line:c3a}-\ref{alg:query:line:c3b}.
If neither hold, we consider the state $s_{\Pi_{\rm curr}, \Trc-\delta_t}$ and repeat this process.
For pseudocode describing our approach see~\ref{alg:query}.
}

\begin{algorithm}[t]
\caption{Query}\label{alg:query}  
    \AlgFontSize
\hspace*{\algorithmicindent} \textbf{Inputs:} $\calM, \Shome$
\begin{algorithmic}[1]
\Procedure{PlanPathByLatching}{$s,g,\pi_{\textrm{curr}}$}
\If{$\Pi_{\textrm{home}} \leftarrow \calM (\Shome,g)$ exists} \Comment{lookup root path}
        \If{\textsc{CanLatch}($s,\Pi_{\textrm{home}}$)}
            \State $\pi_{\textrm{home}} \leftarrow$\textsc{PlanPathWithExperience}($s_{\textrm{home}},g,\Pi_{\textrm{home}}$)
            \State $\pi \leftarrow$ \textsc{MergePathsByLatching}($\pi_{\textrm{curr}},\pi_{\textrm{home}}, s$)
            \State \textbf{return} $\pi$
            \label{alg:query:line:c3b}
        \EndIf
    \EndIf
\State \textbf{return failure}
\EndProcedure
\vspace{2mm}
\Procedure{Query}{$g, \pi_{\textrm{curr}},s_{\textrm{start}}$}
    \State $s \leftarrow \pi_{\textrm{curr}}[0]$    \Comment{first state on $\pi_{\textrm{curr}}$}
    \If{$\Pi_{\textrm{curr}} \leftarrow$ $\calM(s,g)$ exists} \Comment{lookup root path}
        \State $\pi_{\textrm{new}} \leftarrow$\textsc{PlanPathWithExperience}($s,g,\Pi_{\textrm{curr}}$)
        \State \textbf{return} $\pi_{\textrm{new}}$
    \EndIf
    \For {\textbf{each} $s \in \pi_{\textrm{curr}}$ (from last to $\Sstart$)} \Comment{states up to $\Trc$}
        \label{alg:query:line:c2a}
         
    \If{$\Pi_{\textrm{next}} \leftarrow$ $\calM(s,g)$ exists} \Comment{lookup root path}
        \State $\pi_{\textrm{next}} \leftarrow$\textsc{PlanPathWithExperience}($s,g,\Pi_{\textrm{next}}$)
        \State $\pi \leftarrow$ \textsc{MergePaths}($\pi_{\textrm{curr}},\pi_{\textrm{next}},s$)
        \State \textbf{return} $\pi$
        \label{alg:query:line:c2b}
    \EndIf
%
%
    \textcolor{blue}{
    \If {$\pi \leftarrow $\textsc{PlanPathByLatching}($s,g,\pi_{\textrm{curr}}$) \textbf{successful}}
        \State \textbf{return} $\pi$
    \EndIf
    }

\EndFor
    \State \textbf{return failure} \Comment{goal is not reachable}
\EndProcedure
\end{algorithmic}
\end{algorithm}

%

\subsubsection{Latching: Reusing Root Paths}
\label{latching}
We introduce an additional step called ``Latching" to minimise the number of root paths computed in Alg.~\ref{alg:preprocess}. With latching, the algorithm tries to reuse previously-computed root paths as much as possible using special motion primitives that allow transitions from one root path to another.

For the time-configuration motion planner, the primitive is computed from a state $s \in \Pi_i$ to $s' \in \Pi_j$ such that $t(s') = t(s) + \delta_t$ by simple linear interpolation while ensuring the feasibility of the motion. Specifically, given the nominal joint velocities of the robot, if $s'$ can be reached from $s$ in time $\delta_t$, while respecting the kinematic and collision constraints, then the transition is allowed.

Clearly, this approach does not ensure that the torque limits are respected. Therefore, for the kinodynamic planner, we interpolate by fitting a cubic polynomial from the state $s \in \Pi_i$ to $s' \in \Pi_j$ that satisfies the boundary conditions. We then ensure motion validity by checking the joint velocity and torque limits of each state along the interpolated trajectory.

In Alg.~\ref{alg:preprocess}, before calling the \textsc{Preprocess} procedure for a state, the algorithm removes the set of goals that can be covered via latching, thereby reducing the number of goals that need to be covered by the \textsc{Preprocess} procedure. Correspondingly, in Alg.~\ref{alg:query}, an additional procedure is called to check if the path can be found via latching. These additions in the two pseudocodes are shown in blue. An iteration of the complete algorithm with latching is illustrated in Fig~\ref{fig:pl_latching}.

\subsection{Theoretical Analysis}

\subsubsection{CTMP Properties}
\hfill
\begin{lemma}
\label{lemma:bounded_time}
For any state \Sstart of the robot during execution, provided $t(\Sstart) \leq \Trc$, the algorithm is CTMP.
\end{lemma}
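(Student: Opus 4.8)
The plan is to bound the running time of the query procedure (Alg.~\ref{alg:query}) directly and show that on \emph{every} input it halts within \Tbound, which is exactly what Def.~\ref{ctmp:def} demands. The structural observation driving the argument is that a single invocation of \textsc{Query} performs only two kinds of work: (i) a bounded number of $O(1)$ operations --- hash-table lookups into \calM, \textsc{CanLatch} tests, and one final \textsc{MergePaths}/\textsc{MergePathsByLatching} --- and (ii) \emph{at most one} call to \textsc{PlanPathWithExperience}, after which the procedure immediately returns. I would establish (ii) by tracing the control flow: the initial lookup either returns a path after one experience-based plan, or the backward loop over $s \in \pi_{\textrm{curr}}$ returns the instant any lookup (or \textsc{PlanPathByLatching} call, which itself only ever invokes the planner on a branch that returns) succeeds; if nothing succeeds the loop returns failure having done only $O(1)$ bookkeeping per state. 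The hypothesis $t(\Sstart)\le\Trc$ is used here to guarantee that \Sstart lies inside the replanable window, so the loop range stays within the preprocessed region.

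Next I would bound the two contributions separately. The backward loop ranges only over replanable states, i.e.\ those within \Trc of \Shome; since consecutive states are spaced $\delta_t$ apart, there are at most $\ell = \lceil \Trc/\delta_t\rceil$ of them, a fixed constant independent of the size and complexity of the planning problem. Hence the aggregate cost of all lookups, latching checks, and the single merge is $O(\ell)=O(1)$, and by the budgeting convention of the footnote after Alg.~\ref{alg:preprocess} this overhead is conservatively absorbed into \Tconst (the \calM lookups being $O(1)$ under perfect hashing~\cite{czech1997perfect}). For the lone planning call, \textsc{PlanPathWithExperience} is executed under the explicit time budget $\Tbound-\Tconst$ and aborts on timeout, so it terminates within $\Tbound-\Tconst$ regardless of outcome. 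Combining the two, the worst case is one experience-based plan plus the full overhead, namely $\Tconst + (\Tbound-\Tconst) = \Tbound$; the failure path is even cheaper, costing at most $\Tconst < \Tbound$.

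I expect the main obstacle to be stating cleanly \emph{which} fact each half of the bound actually needs. For the weak CTMP property asserted by this lemma, only the hard time cutoff of $\Tbound-\Tconst$ on \textsc{PlanPathWithExperience} is required, and the constancy of $\ell$ together with the $\Tconst$ accounting; I would present these briefly since they follow from the footnote convention. What I would flag but deliberately \emph{not} invoke here is the stronger claim that the single planning call actually succeeds (rather than merely timing out): that relies on the determinism of the wA* heuristic search and on \calM containing only $(s,g,\Pi)$ triples certified feasible within $\Tbound-\Tconst$ during preprocessing, and it belongs to the separate CTMP-completeness argument. Keeping this distinction explicit is the key to a correct proof, because the bounded-time guarantee must hold on all inputs --- including those for which \textsc{Query} returns failure --- so the argument must never depend on the existence of a covering state.
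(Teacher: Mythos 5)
Your proposal is correct and follows essentially the same route as the paper's proof: bound the number of $\calM$ lookups and \textsc{CanLatch} tests by the constant $\ell = \Trc/\delta_t$, absorb that overhead into \Tconst, observe that \textsc{PlanPathWithExperience} is invoked at most once under the budget $\Tbound - \Tconst$, and sum to \Tbound. Your explicit handling of the failure path and the clean separation of the timing bound from the CTMP-completeness argument are sharper than the paper's wording, but they refine rather than change the argument.
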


\begin{proof}
    We can prove it by showing that the query stage (Alg.~\ref{alg:query}) returns within \Tbound time and has a constant-time complexity.
    The number of times the algorithm queries~$\calM$, which is an $O(1)$ operation assuming perfect hashing, is bounded by $O(l)$ where $l=\Trc/\delta_t$ is the maximum number of time steps from $t = 0$ to $\Trc$. 
    The number of times the algorithm attempts to latch on to a root path (namely, a call to \textsc{CanLatch}  which is a constant-time operation) is also bounded by $l$. 
    As $l$ is constant for of a fixed time cut off \Trc,  the execution time of the aforementioned operations constitute \Tconst (constant-time).
    Finally, Alg.~\ref{alg:query} calls the \textsc{Plan} method only once.
    Since the execution time of \textsc{Plan} is bounded by \Tbound - \Tconst, the overall execution time of Alg.~\ref{alg:query} is \Tbound. Hence it is a CTMP algorithm.
\end{proof}

\begin{lemma}
For any state \Sstart of the robot during execution, provided $t(\Sstart) \leq \Trc$, the algorithm is CTMP-complete.
\end{lemma}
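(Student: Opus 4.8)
The plan is to show that for any replanable state \Sstart (that is, with $t(\Sstart) \leq \Trc$) and any goal $g$ reachable from \Sstart in the sense of Def.~\ref{def:reachable}, the query phase (Alg.~\ref{alg:query}) returns a valid path from \Sstart to $g$. Combined with Lemma~\ref{lemma:bounded_time}, which already guarantees that whatever path is returned comes back within \Tbound, this yields coverage of every reachable goal per Def.~\ref{def:covered}, which is precisely CTMP-completeness (Def.~\ref{def:complete}). So the time bound is not the issue here: the entire content of the lemma is the claim that the query never reports failure on a reachable goal.

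I would reduce this to a single invariant established in preprocessing: upon termination of \textsc{Preprocess} (initially invoked with $(\Shome,\Gfull,\emptyset)$), the map \calM contains, for every replanable state $s$ and every goal $g$ reachable from $s$, a root path that the experience-based planner can use to reach $g$ from $s$ within \Tbound. Given this invariant the query succeeds: on a pose update $g$, Alg.~\ref{alg:query} inspects the first state of $\pi_{\textrm{curr}}$ and then sweeps the states between \Sstart and the $\Trc$-state, returning as soon as a lookup $\calM(s,g)$ or a successful latch is found. The invariant guarantees such an $s$ exists on $\pi_{\textrm{curr}}$, since the replanable states of $\pi_{\textrm{curr}}$ lie on a root path that preprocessing already examined.

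The core of the argument is thus to prove the coverage invariant, which I would establish by induction on the number of remaining time steps $(\Trc - t(s))/\delta_t$, mirroring the recursion in Alg.~\ref{alg:preprocess}. For the base case, \textsc{PlanRootPaths}$(\Shome,\Gfull)$ loops until its uncovered set empties, so every goal in \Gfull is either placed in some $G_i$ (hence covered by \Shome via $\Pi_i$) or flagged unreachable; thus \Shome covers all of its reachable goals. For the inductive step, fix a replanable state $s$ on a root path $\Pi_i$. Observation~\ref{obs:1} justifies initialising $s$'s uncovered set (line~\ref{uncov_init}) with only the goals covered by \Sstart rather than all of \Gfull, discarding no reachable goal of $s$ because reachability only shrinks along a path; Observation~\ref{obs:2} justifies the backward sweep, so a goal covered at a later state need not be re-covered at $s$. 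The recursive call on line~\ref{recursion} invokes \textsc{PlanRootPaths}$(s,\cdot)$, which terminates with $s$'s uncovered-reachable set covered, and then recurses on the new root paths whose states are strictly later in time. Since $t$ increases by $\delta_t$ on every descent and the recursion halts once $t>\Trc$, the depth is at most $l=\Trc/\delta_t$ and each level terminates because $\Gfull$ is finite; hence preprocessing terminates with the invariant holding at every replanable state. Latching only enlarges the set of covered $(s,g)$ pairs, as the \textsc{TryLatching} step removes goals from the uncovered set before recursion, so it cannot break completeness.

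I expect the main obstacle to be making the backward-sweep coverage argument fully rigorous, in particular verifying that the replanable states of any path the robot could actually be executing at query time coincide with the states preprocessing examined, so the query's lookups are guaranteed to hit a covered state. This requires observing that every executed path is assembled via \textsc{PlanPathWithExperience} and \textsc{MergePaths} from root-path prefixes whose replanable states were processed by Alg.~\ref{alg:preprocess}, and that latching transitions only move the robot onto other already-processed root paths. Once this correspondence is pinned down, the reduction to Lemma~\ref{lemma:bounded_time} closes the proof.
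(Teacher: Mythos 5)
Your proposal follows essentially the same route as the paper's own proof: it splits CTMP-completeness into (1) preprocessing covers every reachable goal of every replanable state and (2) the query's backward sweep over the states between \Sstart and the \Trc-state is guaranteed to hit a covering state, and then invokes Lemma~\ref{lemma:bounded_time} for the \Tbound guarantee. Your explicit induction on remaining time steps and the flagged correspondence between executed paths and preprocessed states are a more rigorous rendering of what the paper asserts informally about the recursion in Alg.~\ref{alg:preprocess}, but the decomposition and key ideas are the same.
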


\begin{proof}[Proof]
In order to prove it we need to show that for any \Sstart that the system can be at, (1) if a~$g$ is reachable from \Sstart, it is \emph{covered} by \Sstart in Alg.~\ref{alg:preprocess} and (2) if~$g$ is covered by \Sstart, Alg.~\ref{alg:query} is guaranteed to return a path within \Tbound time.

Alg.~\ref{alg:preprocess} starts by computing a set of root paths from \Shome that ensures that it covers all of its reachable goals. It then iterates over all states on these paths and adds additional root paths ensuring that these states also cover their reachable goals. It does it recursively until no state \Sstart before \Trc is left with any uncovered $g$ which could be reachable.
    
Alg.~\ref{alg:preprocess} covers~$g$ via at least one state between \Sstart and the state at~$t_{rc}$ (inclusively) (loop at line~\ref{loop2}).
In query phase, Alg.~\ref{alg:query} iterates through all states between \Sstart and the state at~$t_{rc}$ (inclusively) to identify the one that covers~$g$ (loop at line~\ref{alg:query:line:c2a}). Since~$g$ is covered by at least one of these states by Alg.~\ref{alg:preprocess}, Alg.~\ref{alg:query} is guaranteed to find a path from \Sstart to $g$.
Moreover, from lemma~\ref{lemma:bounded_time} we have that the path is returned within \Tbound time. Hence the algorithm is CTMP-complete.

\end{proof}

\ignore{
\begin{lemma}[Completeness]
For a robot state~$s$ and a goal~$g$, if~$g$ is \emph{reachable} from~$s$ and~$t(s) \leq t_{rc}$, the algorithm is guaranteed to find a path from~$s$ to~$g$.
\end{lemma}

\begin{proof}[Proof (Sketch)]
In order to prove it we show that (1) if~$g$ is reachable from~$s$, it is \emph{covered} by~$s$ in Alg.~\ref{alg:preprocess} and (2) if~$g$ is covered by~$s$, Alg.~\ref{alg:query} is guaranteed to return a path.

Alg.~\ref{alg:preprocess} starts by computing a set of root paths from \Shome that ensures that it covers all of its reachable goals. It then iterates over all states on these paths and adds additional root paths ensuring that these states also cover their reachable goals. It does it recursively until no state before \Trc is left with uncovered goals. Therefore, it is ensured that any reachable~$g$ is covered by~$s$, provided that $t(s) \leq t_{rc}$. 
    
Alg.~\ref{alg:preprocess} covers~$g$ via at least one state between~$s$ and the state at~$t_{rc}$ (inclusively) (loop at line~\ref{loop2}).
In query phase, Alg.~\ref{alg:query} iterates through all states between~$s$ and the state at~$t_{rc}$ (inclusively) to identify the one that covers~$g$ (loop at line~\ref{alg:query:line:c2a}). Since~$g$ is covered by at least one of these states by Alg.~\ref{alg:preprocess}, Alg.~\ref{alg:query} is guaranteed to find a path from $s$ to $g$.

\end{proof}

\begin{lemma}[Constant-time complexity]
\label{lemma:bounded_time}
Let $s$ be a replanable state and $g$ a goal provided by the perception system.
If~$g$ is reachable, the planner is guaranteed to provide a solution in constant time.
\end{lemma}

\begin{proof}
    We have to show that the query stage (Alg.~\ref{alg:query}) has a constant-time complexity. 
    The number of times the algorithm queries $\calM$ which is $O(1)$ operation in case of perfect hashing is bounded by $l = \Trc/\delta_t$ which is the maximum number of time steps from $t = 0$ to $\Trc$. 
    The number of times the algorithm will attempt to latch on to a root path (namely, a call to \textsc{CanLatch}  which is a constant-time operation) is also bounded by $l$. Finally, Alg.~\ref{alg:query} calls the \textsc{Plan} method only once.
    Since the state that it is called for covers $g$, meaning that the planner can find a path from it to $g$ within \Tbound, the computation time is constant. 
    Hence the overall complexity of Alg.~\ref{alg:query} is $O(1)$.
\end{proof}
}

\subsubsection{Space Complexity}

\vspace{2mm}
\begin{lemma}
    Let $n_\Pi$ be the maximum number of root paths needed to cover a goal region for a given state \Sstart and $\ell'$ be the number of discretized time steps at which the algorithm computes root paths, then algorithm requires~$O(n_\Pi^{\ell'})$ space.
\end{lemma}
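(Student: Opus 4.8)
The plan is to bound the total number of root paths that the preprocessing routine stores, and then observe that the overall space is this count times the (polynomially bounded) size of a single stored root path together with its associated goal set and its entries in $\calM$. Since $n_\Pi$ is, by hypothesis, the maximum number of root paths that a single invocation of \textsc{PlanRootPaths} needs in order to cover a goal region from a fixed state, and $\ell'$ is the number of discretized time steps at which root paths are ever computed, the count mirrors the straw-man bound of $O(n_{\rm goal}^{\ell})$ but with the much smaller branching base $n_\Pi$ in place of $n_{\rm goal}$.

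First I would argue that the recursion of \textsc{Preprocess} has depth at most $\ell'$. Each recursive call in line~\ref{recursion} is made for a state $s$ lying at a strictly later discretized time step than the state that \textsc{Preprocess} was invoked with, and the guard $t(s_{\textrm{start}}) \le \Trc$ halts the recursion once the cutoff \Trc is reached. As there are only $\ell'$ discretized time steps between $t=0$ and \Trc at which paths are generated, no chain of recursive calls can be longer than $\ell'$.

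Next I would control the branching per level. Group the root paths into \emph{generations} by the time step at which they branch off: generation~$1$ consists of the $\le n_\Pi$ root paths computed from \Shome, and a generation-$(k{+}1)$ root path is one created by a \textsc{Preprocess} call made for a state sitting at the $(k{+}1)$-st time step of an already-existing generation-$\le k$ root path. Each such call produces at most $n_\Pi$ new root paths, by the defining property of $n_\Pi$. Letting $a_k$ denote the number of generation-$k$ root paths and $T_k = \sum_{j \le k} a_j$ the cumulative count, and noting that the number of states available at time step $k{+}1$ equals the number of paths created so far, one obtains $a_{k+1} \le n_\Pi \, T_k$, hence $T_{k+1} \le (1+n_\Pi)\, T_k$ with $T_1 \le n_\Pi$. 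A routine induction then gives $T_{\ell'} = O(n_\Pi^{\,\ell'})$ for fixed $\ell'$, so at most $O(n_\Pi^{\,\ell'})$ root paths are ever stored; multiplying by the per-path storage (the path itself, its goal set $G_i$, and the associated entries of $\calM$) yields the claimed $O(n_\Pi^{\,\ell'})$ space bound.

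The main obstacle I anticipate is justifying that the per-generation growth factor is $n_\Pi$ rather than $n_\Pi \cdot \ell'$: the inner loop of \textsc{Preprocess} (line~\ref{loop2}) iterates over \emph{every} replanable state of a root path, so a careless count charges each path an extra factor of $\ell'$. The fix is to attribute each newly created root path to the unique (path, time-step) branching point that spawned it, and to lean on observations~\ref{obs:1} and~\ref{obs:2}: the shared, monotonically shrinking uncovered set $G_i^{\textrm{uncov}}$ guarantees that once a goal is covered by a later state it is never re-covered, so the backtracking loop does not regenerate paths for already-covered goals and the $\le n_\Pi$ bound per branching point is not inflated by the loop length. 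Making this attribution precise—so that the count of \emph{distinct} root paths, rather than the raw number of \textsc{Preprocess} invocations, obeys the clean recurrence above—is the crux of the argument.
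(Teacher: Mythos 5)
Your proof is correct and takes essentially the same route as the paper's: count the stored root paths level-by-level over the $\ell'$ discretized time steps at which \textsc{PlanRootPaths} is ever invoked, with each level contributing a multiplicative factor of $n_\Pi$, yielding $O(n_\Pi^{\ell'})$. The paper's own proof is only a terse sketch of this counting; your generation-based recurrence $T_{k+1} \le (1+n_\Pi)\,T_k$ with $T_1 \le n_\Pi$, and the attribution of each new root path to a unique (path, time-step) branching point, simply makes explicit the bookkeeping the paper leaves implicit.
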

\begin{proof}
 At first, the algorithm stores~$n_\Pi$ paths~(worst case) from \Shome to \Gfull consuming~$O(n_\Pi)$ space. It then computes root paths starting with the states at \Trc and iterating backwards to previous time steps~(see Alg.~\ref{alg:preprocess} line~\ref{loop2}). The loop terminates when no more uncovered goals are left. If~$\ell' - 1$ is the maximum number of iterations of loop at line~\ref{loop2} that Alg.~\ref{alg:preprocess} undergoes, then it requires ~$O(n_\Pi^{\ell'})$ space.
\end{proof}

Note that Alg.~\ref{alg:preprocess} reduces the space complexity of the naive approach~(Sec.~\ref{subsec:strawman}) primarily by (1) compressing the number of paths required to cover a goal region from~$n_{\rm goal}$ to $n_\Pi$ by planning with experience reuse and (2) compressing the number of time steps for which the algorithm requires preprocessing for from $\ell$ to $\ell'$ by leveraging~\ref{obs:2}.
Additionally, the latching feature allows the algorithm to further reduce the required number of paths by a significant amount.

Pertaining to the structure of our problem, the algorithm works under the assumption that $n_\Pi \ll n_{\text{goal}}$ and $\ell' \ll \ell$.  To demonstrate the magnitude of compression, in preprocessing for the time-configuration planner (see Table.~\ref{table:pp1}), the algorithm covers 7197 goals with only nine root paths for \Shome and only computes root paths for a single time step $t = 0$ (i.e., $\ell' = 1$) out of a total of eight time steps up to \Trc. 

%

\ignore{
Before describing how this is done, consider two root paths $\Pi_i$ and $\Pi_j$ with associated goal regions $G_i$ and~$G_j$, respectively.
Now, let $s_i$ and $s_j$ be states on $\Pi_i$ and $\Pi_j$, respectively, such that the timestamp associated with $s_j$ is $\delta_t$ time after the one associated with $s_i$. Furthermore, assume that the path between $s_i$ and $s_j$ is collision free. 
Now, assume that the robot is executing path $\Pi_i$ (targeting a goal in $G_i$) and the perception system updates the goal to be reached as $g_j \in G_j$. 
If the robot did not yet reach $s_i$ then it can reach $g_j$ by
(i)~continuing to follow $\Pi_i$ until $s_i$ is reached, 
(ii)~move to $s_j$ on $\Pi_j$ and 
(iii)~use $\Pi_j$ to reach $g_j$.
We term the process we just described of moving from one root path to another as ``latching'' on to a new root path.

Let $s_{\Pi_i, t}$ be the state that is $t$ time from \Shome on path $\Pi_i$.
If a collision-free path existed from $s_{\Pi_i, \Trc}$ to $s_{\Pi_j, \Trc+\delta_t}$ for every $i,j$ then we could latch on from any root path to any other root path. Moreover, following Assumption~\ref{assum:4}, $\Trc$ is the last time that the perception could update the goal location so no other replanning would be required.
Unfortunately, this may not be the case.
Thus, for every root path $\Pi_i$, we consider $s_{\Pi_i, \Trc}$ and check if we can latch on to all other root paths. 
If this can't be done, then we can tr

considering the last replanning state $s_{\Pi_i, \Trc}$ (namely, the state that is $t=\Trc$ time from \Shome). For every other root path $\Pi_j$, we test if the path connecting $s_{\Pi_i, \Trc}$ to $s_{\Pi_j, \Trc + \delta_t}$ (the state on $\Pi_j$ that is $\Trc+\delta_t$ away from \Shome) is collision free. 

In the straw man algorithm this was obtained by recursively computing a path for the replanning states along all the previously-computed paths.
Here, we attempt to re-use the previously-computed paths as much as possible by ``latching'' onto them.

More formally, for each root path~$\Pi_i$, we start by considering the last replanning state $s_{\Pi_i, \Trc}$ (namely, the state that is $t=\Trc$ time from \Shome). For every other root path $\Pi_j$, we test if the path connecting $s_{\Pi_i, \Trc}$ to $s_{\Pi_j, \Trc + \delta_t}$ (the state on $\Pi_j$ that is $\Trc+\delta_t$ away from \Shome) is collision free. 
If this is the case we know that any goal in $G_j$ can be
}

\section{Evaluation}
\label{sec:eval}
We evaluated our algorithm in simulation and on a real robot. The conveyor speed that we used for all of our results was 0.2$m/s$. 
We used Willow Garage's PR2 robot in our experiments using its 7-DOF arm. For the real robot experiments we tested with the time-configuration planner whereas for the simulated results we tested both for the time-configuration and kinodynamic planners.
The additional time dimension made the planning problem eight dimensional.
The experiments can be viewed at
~\url{https://youtu.be/iLVPBWxa5b8}.

\begin{table*}[t]
\begin{subtable}[h]{\textwidth}
\centering
\begin{adjustbox}{width=1\textwidth}
\begin{tabular}{|c||c||c|c|c||c|c|c||c|c|c|}
\hline
   & CTMP 
   & \multicolumn{3}{c|}{wA*}
   & \multicolumn{3}{c|}{E-Graph}
   & \multicolumn{3}{c|}{RRT}
   \\ \hline
   & $T_{b}$ = \textbf{0.2} 
   & $T_{b}$ = 0.5 & $T_{b}$ = 1.0 & $T_{b}$ = 2.0 
   & $T_{b}$ = 0.5 & $T_{b}$ = 1.0 & $T_{b}$ = 2.0 
   & $T_{b}$ = 0.5 & $T_{b}$ = 1.0 & $T_{b}$ = 2.0 
   \\ \hline
Pickup success [\%]                   
& \textbf{100} & 0.0 & 0.0 & 18.0 & 0.0 & 0.0 & 80.0 & 0.0 & 0.0 & 18.0 \\ \hline
Planning success [\%]                  
& \textbf{100} & 4.0 & 17.0 & 19.0 & 31.0 & 80.0 & 90.0 & 12.0 & 9.0 & 13.0 \\ \hline
Planning time [s]
& \textbf{0.085} & 0.433 & 0.628 & 0.824 & 0.283 & 0.419 & 0.311 & 0.279 & 0.252& 0.197\\ \hline
Planning cycles 
& \textbf{3} & 2 & 2 & 2 & 2 & 2 & 2 & 2 & 2 & 2 \\ \hline
Path cost [s]                         & 9.49        & 8.19          & 8.28          & \textbf{7.60}          & 8.54          & 8.22          & 7.90          & 9.68          & 8.96          & 8.04          \\ \hline
\end{tabular}
\end{adjustbox}
\caption{Simulation results---Time-configuration Planner.}
\label{tab:sim_results_p1}
\end{subtable}

\begin{subtable}[h]{\textwidth}
\centering
\begin{adjustbox}{width=0.75\textwidth}
\begin{tabular}{|c||c||c|c|c||c|c|c|}
\hline
   & CTMP 
   & \multicolumn{3}{c|}{wA*}
   & \multicolumn{3}{c|}{E-Graph}
   \\ \hline
   & $T_{b}$ = \textbf{0.25} 
   & $T_{b}$ = 0.5 & $T_{b}$ = 1.0 & $T_{b}$ = 2.0 
   & $T_{b}$ = 0.5 & $T_{b}$ = 1.0 & $T_{b}$ = 2.0 
   \\ \hline
Pickup success [\%]                   
& \textbf{100} & 0.0 & 0.0 & 18.0 & 0.0 & 0.0 & 24.0 \\ \hline
Planning success [\%]                  
& \textbf{100} & 14.0 & 27.0 & 44.0 & 35.0 & 22.0 & 38.0 \\ \hline
Planning time [s]
& \textbf{0.122} & 0.325 & 0.732 & 1.45 & 0.1593 & 0.405 & 0.413\\ \hline
Planning cycles 
& \textbf{3} & 2 & 2 & 2 & 2 & 2 & 2 \\ \hline
Path cost [s]                         & 9.82        & 8.74          & 9.15          & 9.39          & 8.63          & 8.72          & \textbf{8.15}          \\ \hline
\end{tabular}
\end{adjustbox}
\caption{Simulation results---Kinodynamic Planner.}
\label{tab:sim_results_p2}
\end{subtable}
\caption{Simulation results averaged over 50 experiments. Here $T_b$ denotes the (possibly arbitrary) time bound that the algorithm uses. Note that for our method $T_b = \Tbound$ is the time bound that the algorithm is ensured to compute a plan.
CTMP algorithm uses a fixed $T_b$ (0.2$s$). For the baselines, since their planning times are unbounded, we test them with $T_b =$ 0.5, 1.0 and 2.0$s$. Our CTMP approach shows a task and planning success rate of 100$\%$ for both the planners. Among the baselines, E-Graph shows the best performance, however its performance drops significantly for the kinodynamic planning problem.}
\label{tab:sim_results}
\end{table*}



\subsection{Experimental setup}

\subsubsection{Sense-plan-act cycle}
As object $o$ moves along the conveyor belt, we use the Brute Force ICP pose estimation baseline proposed in \cite{perch} to obtain its 3-Dof pose for each captured input point cloud.
We follow the classical sense-plan-act cycle as depicted in Fig.~\ref{fig:tl}.
Specifically, 
the perception system captures an image (point cloud) of the object~$o$ at time~$t_{\textrm{img}}$
followed by a period of duration $T_{\textrm{perception}}$ in which the perception system estimates the pose of $o$.
At time $t_{\textrm{msg}} = t_{\textrm{img}} + T_{\textrm{perception}}$, planning starts for a period of $T_{\textrm{planning}}$ which is guaranteed to be less than~$\Tbound$.
Thus, at $t_{\textrm{plan}} = t_{\textrm{msg}} + T_{\rm planning}$ the planner waits for an additional duration of $T_{\rm wait} = \Tbound - T_{\rm planning}$.
Finally, at~$t_{\textrm{exec}} = t_{\textrm{plan}} + T_{\rm wait}$, the robot starts executing the plan. Note that the goal $g$ that the planner plans for is not for the object pose at $t_{\textrm{img}}$ but its forward projection in time to $t_{\textrm{exec}}$ to account for $T_{\textrm{perception}}$ and $T_{\textrm{bound}}$.
While executing the plan, if we obtain an updated pose estimate, the execution is preempted and the cycle repeats.

\begin{figure}[t]
    \centering
    \includegraphics[width=0.45\textwidth]{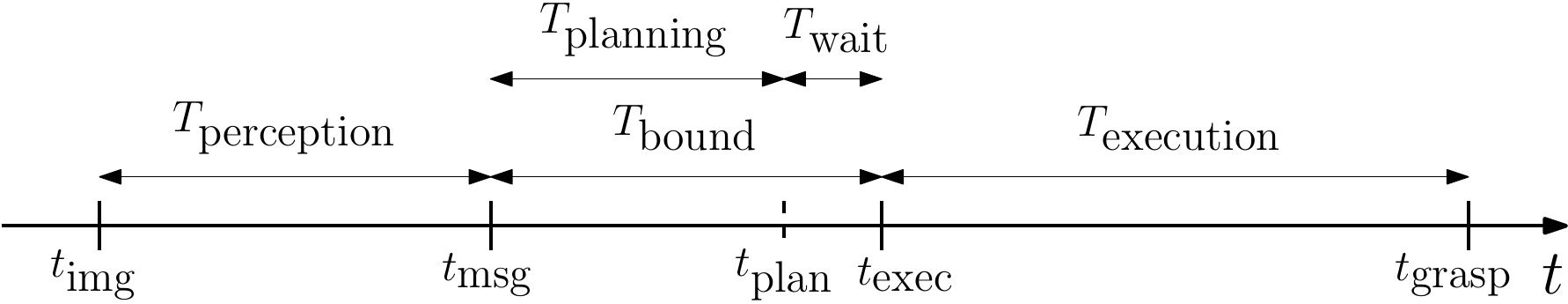}
    \caption{\CaptionTextSize Timeline of the sense-plan-act cycle.}
    \label{fig:tl}
\end{figure}

\subsubsection{Goal region specification}
To define the set of all goal poses~$\Gfull$, we need to detail our system setup, depicted in Fig.~\ref{fig:pe}.
The conveyor belt moves along the $x$-axis from left to right.
We pick a fixed $x$-value termed~\Xexec, such that when the incoming $o$ reaches \Xexec as per the perception information, at that point we start execution.

%
Recall that a pose of an object~$o$ is a three dimensional point~$(x,y,\theta)$ corresponding to the $(x,y)$ location of~$o$ and to its orientation (yaw angle) along the conveyor belt.
$\Gfull$ contains a fine discretization of all possible $x,y$ and $\theta$ values in  $[\Xexec-2\varepsilon, \Xexec+2\varepsilon]$.
We select $\Gfull$ such that $\varepsilon = $2.5$cm$, making the goal region 10$cm$ long along x-axis. Its dimension along $y$-axis is 20$cm$, equal to the width of the conveyor belt. The discretization in $x,y$ and $\theta$ is 1.0$cm$ and 10$\deg$ respectively.

In the example depicted in Fig.~\ref{fig:pe}, the thick and the thin solid rectangles show the ground truth and estimated poses, respectively at two time instances in the life time of the object.
The first plan is generated for the pose shown at $x_{\textrm{exec}}$. During execution, the robot receives an improved estimate and has to replan for it. At this point we back project this new estimate in time using the known speed of the conveyor and the time duration between the two estimates. This back-projected pose (shown as the dotted rectangle) is then picked as the new goal for replanning. Recall that under our assumption about the pose error in perception being $\varepsilon$, the back projected pose will always lie inside \Gfull.
%

\begin{figure}
    \centering
    \includegraphics[width=0.48\textwidth]{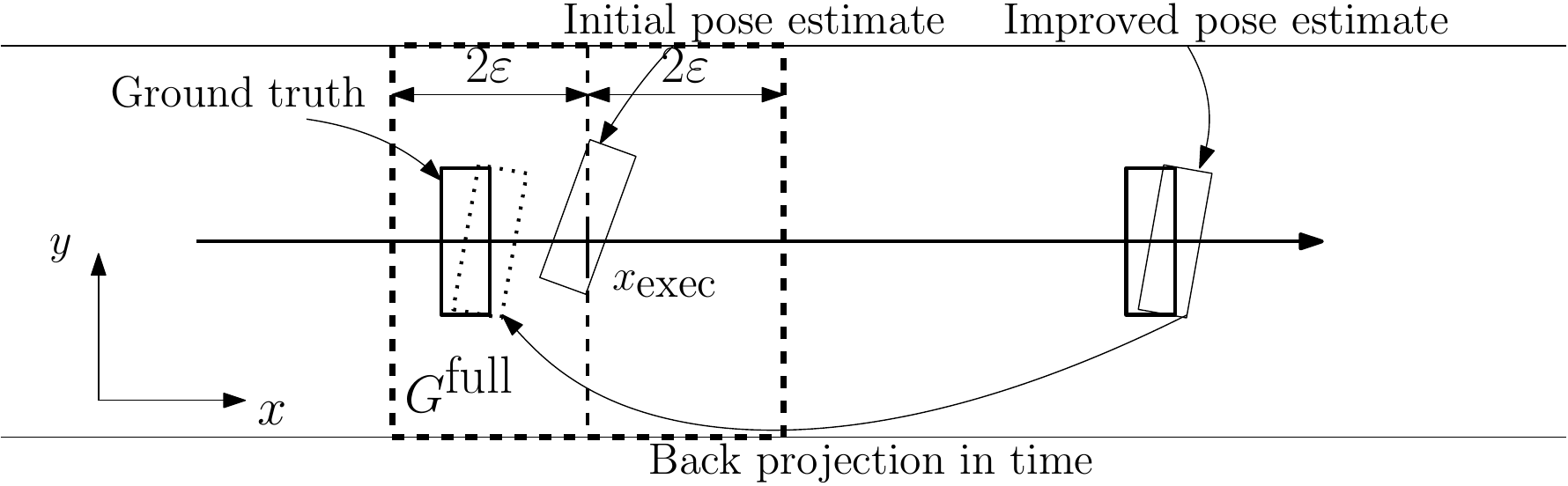}
    \caption{\CaptionTextSize A depiction of \Gfull-specification on a conveyor belt (overhead view) and perception noise handling. 
    }
    \label{fig:pe}
\end{figure}

\subsection{Results}

\begin{table*}[t]
\begin{subtable}[h]{\textwidth}
\begin{adjustbox}{width=1\textwidth}
\begin{tabular}{|c||c|c|c|c|c|c|c|c|c|c|}
\hline
State time & \begin{tabular}[c]{@{}c@{}}Num. of\\ states\end{tabular} & \begin{tabular}[c]{@{}c@{}}Unreachable\\ goals\end{tabular} & \begin{tabular}[c]{@{}c@{}}Covered\\ goals\end{tabular} & \begin{tabular}[c]{@{}c@{}}Covered via\\ root paths\end{tabular} & \begin{tabular}[c]{@{}c@{}}Covered via\\ Latching\end{tabular} & \begin{tabular}[c]{@{}c@{}}Num. of\\ root paths\end{tabular} & \begin{tabular}[c]{@{}c@{}}Num. of\\ states\end{tabular} & \begin{tabular}[c]{@{}c@{}}Latching \\ tries\end{tabular} & \begin{tabular}[c]{@{}c@{}}Latching\\ failures\end{tabular} & \begin{tabular}[c]{@{}c@{}}Processing\\ time\end{tabular} \\ \hline
0 & 1 & 3 & 7197 & 7197 & 0 & 9 & 1641 & 0 & 0 & 2534 \\ \hline
3.5 & 9 & 3 & 7197 & 0 & 7197 & 0 & 0 & 8 & 0 & 0.01 \\ \hline
\end{tabular}%
\end{adjustbox}
\caption{Preprocessing statistics---Time-configuration Planner}
\label{table:pp1}
\end{subtable}

\begin{subtable}[h]{\textwidth}
\begin{adjustbox}{width=1\textwidth}
\begin{tabular}{|c||c|c|c|c|c|c|c|c|c|c|}
\hline
State time & \begin{tabular}[c]{@{}c@{}}Num. of\\ states\end{tabular} & \begin{tabular}[c]{@{}c@{}}Unreachable\\ goals\end{tabular} & \begin{tabular}[c]{@{}c@{}}Covered\\ goals\end{tabular} & \begin{tabular}[c]{@{}c@{}}Covered via\\ root paths\end{tabular} & \begin{tabular}[c]{@{}c@{}}Covered via\\ Latching\end{tabular} & \begin{tabular}[c]{@{}c@{}}Num. of\\ root paths\end{tabular} & \begin{tabular}[c]{@{}c@{}}Num. of\\ states\end{tabular} & \begin{tabular}[c]{@{}c@{}}Latching \\ tries\end{tabular} & \begin{tabular}[c]{@{}c@{}}Latching\\ failures\end{tabular} & \begin{tabular}[c]{@{}c@{}}Processing\\ time\end{tabular} \\ \hline
0 & 1 & 16 & 7184 & 7184 & 0 & 18 & 3120 & 0 & 0 & 25237.7 \\ \hline
3.0 & 16 & 16 & 7184 & 0 & 7184 & 0 & 0 & 13.2 & 0 & 1062 \\ \hline
3.5 & 18 & 38.2 & 7161.8 & 444.6 & 6717.2 & 11.7 & 2203.4 & 18 & 2.4 & 0.02 \\ \hline
\end{tabular}%
\end{adjustbox}
\caption{Preprocessing statistics---Kinodynamic Planner}
\label{table:pp2}
\end{subtable}
\caption{Preprocessing statistics--- We report the preprocessing statistics for the robot states from which the algorithm either computes a latching primitive or computes new root paths for replanning. The statistics are indexed based on the time step of the states. For each time step, we give the number of states at that time step and the average values for each preprocessing metric over all the states at that time stamp.}
\label{table:pp}
\end{table*}

\subsubsection{Real-robot experiments}
\label{sec:robot_results}
To show the necessity of real-time replanning in response to perception updates, we performed three types of experiments: 
using our approach to replan every time new object pose estimate arrives, single-shot planning based on the first object pose estimate 
and single-shot planning using the late (more accurate) pose estimate. 
For each set of experiments, we determined the pickup success rate to grasp the moving object (sugar box) off the conveyor belt. In addition, we report on the perception system's success rate by observing the overlap between the point cloud of the object's 3D model transformed by the predicted pose (that was used for planning) and the filtered input point cloud containing points belonging to the object. 
A high (low) overlap corresponds to an accurate (inaccurate) pose estimate. 
We use the same strategy to determine the range for which the perception system's estimates are accurate and use it to determine the time for the best-pose planning. 
Further, for each method, we determine the pickup success rate given that the perception system's estimate was or was not accurate. 

The experimental results are shown in Table \ref{tab:robot_results}. Our method achieves the highest overall pickup success rate on the robot by a large margin, indicating the importance of continuous replanning with multiple pose estimates. 
First-pose planning has the least overall success rate due to inaccuracy of pose estimates when the object is far from the robot's camera. 
Best-pose planning performs better overall than the first pose strategy, since it uses accurate pose estimates, received when the object is close to the robot. However it often fails even when perception is accurate, since a large number of goals are unreachable due to limited time remaining to grasp the object when it is closer to the robot.
A demonstration of our approach is given in Fig.~\ref{fig:demo}.

\begin{table*}[t]
\centering
\begin{tabular}{|c|c|c|c|c|}
\hline
                    & \begin{tabular}[c]{@{}c@{}}Success \\ rate\end{tabular}
                    & \begin{tabular}[c]{@{}c@{}}Accuracy of \\ perception $[\%]$ \end{tabular} 
                    & \begin{tabular}[c]{@{}c@{}}Success rate \\ (Accurate perception)\end{tabular} 
                    & \begin{tabular}[c]{@{}c@{}}Success rate \\ (Inaccurate perception)\end{tabular} \\ \hline
CTMP          & \textbf{69.23}                                                                      & 42.31                   & \textbf{83.33}                                                                             & \textbf{57.14}                                                                             \\ \hline
First-pose Planning & 16.00                                                                      & 24.00                   & 66.67                                                                             & 0.00                                                                              \\ \hline
Best-pose Planning   & 34.61                                                                      & 34.62                   & 55.56                                                                             & 23.53                                                                             \\ \hline
\end{tabular}
\caption{\CaptionTextSize Real-robot experiments. Success rate for the three experiments (Our method (CTMP), First-pose planning and Best-pose planning) averaged over 50 trials .}
\label{tab:robot_results}
\end{table*}

\ignore{
\begin{table*}[t]
\centering
\begin{tabular}{|c|c|c|c|c|}
\hline
                    & \begin{tabular}[c]{@{}c@{}}Pickup success rate [\%]\\ (Overall)\end{tabular} & Perception success rate [\%]& \begin{tabular}[c]{@{}c@{}}Pickup success rate [\%]\\ (Perception = 1*)\end{tabular} & \begin{tabular}[c]{@{}c@{}}Pickup success rate [\%]\\ (Perception = 0*)\end{tabular} \\ \hline
Our method (E1)          & \textbf{9.23}                                                                      & \textbf{42.31}                   & \textbf{83.33}                                                                             & \textbf{57.14}                                                                             \\ \hline
First-pose planning (E2) & 16.00                                                                      & 24.00                   & 66.67                                                                             & 0.00                                                                              \\ \hline
Best-pose planing (E3)   & 34.61                                                                      & 34.62                   & 55.56                                                                             & 23.53                                                                             \\ \hline
\end{tabular}
\caption{Real-robot Experiments}
\label{tab:robot_results}
\end{table*}
}

\begin{figure*}[t]
    \centering
    \begin{subfigure}{.31\textwidth}
        \includegraphics[trim=0 0 400 0, clip, width=\textwidth]{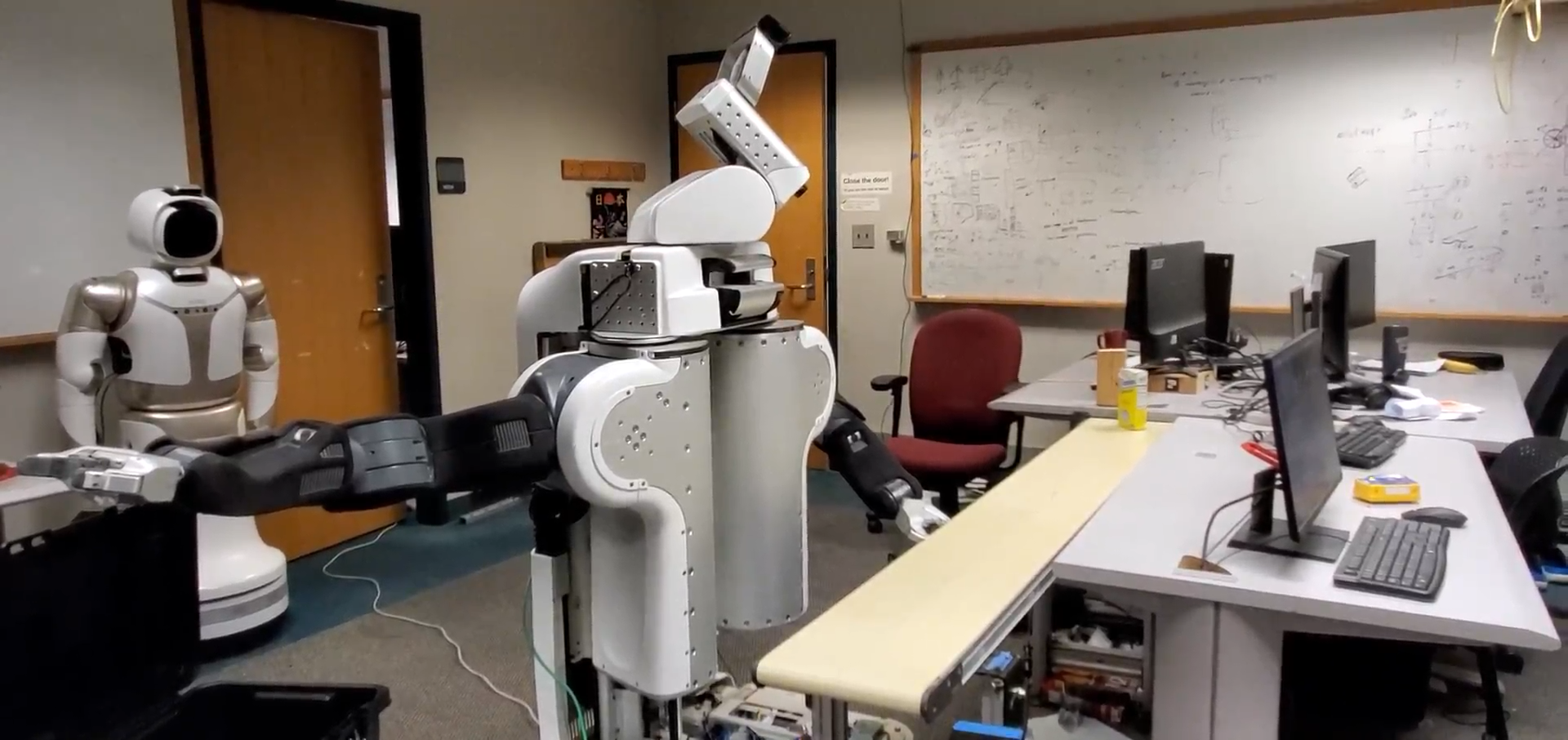}
        \caption{}
        \label{fig:demo1}
    \end{subfigure}
    \hspace{2mm}
    \begin{subfigure}{0.31\textwidth}
        \includegraphics[trim=0 0 400 0, clip, width=\textwidth]{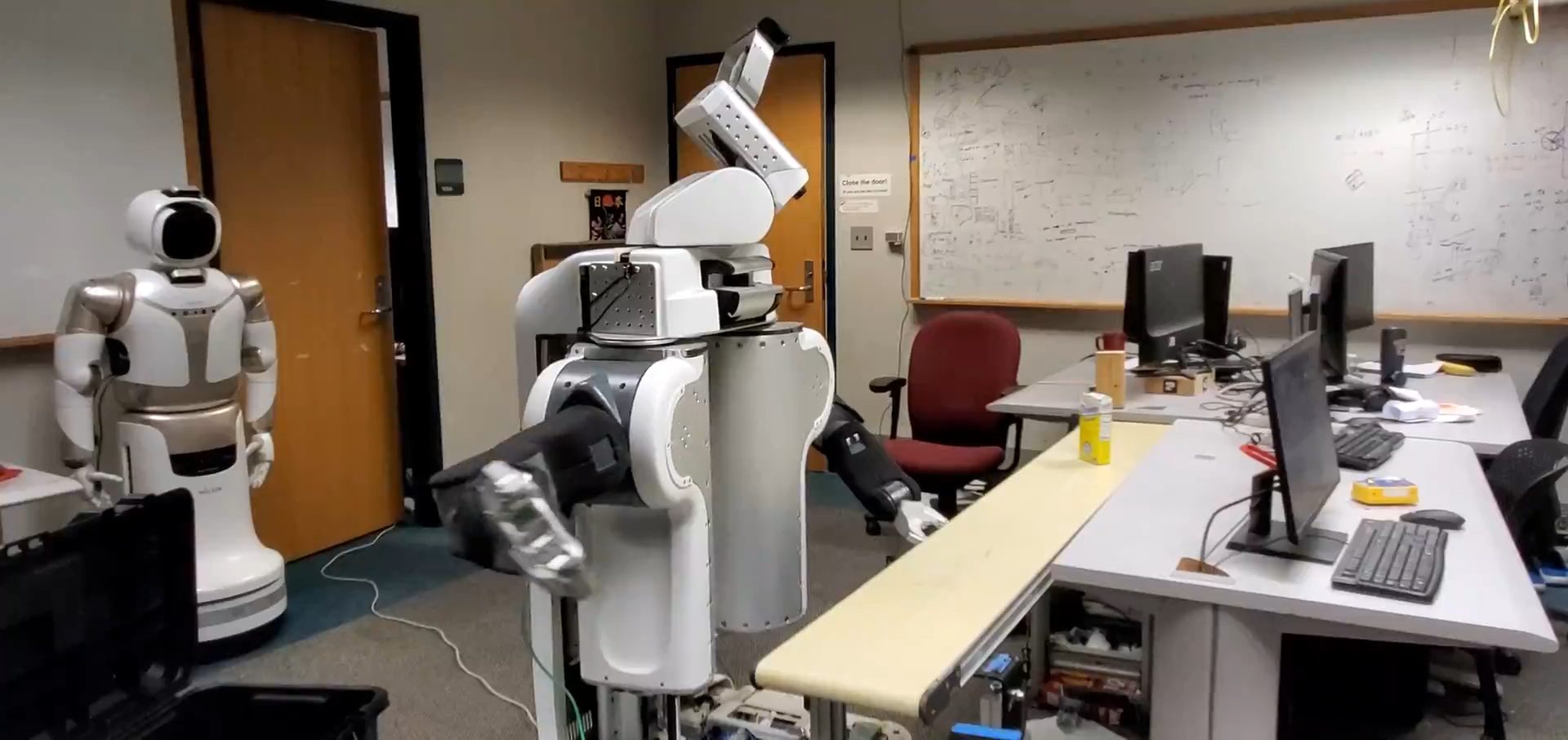}
        \caption{}
        \label{fig:demo2}
    \end{subfigure}
    \hspace{2mm}
    \begin{subfigure}{0.31\textwidth}
        \includegraphics[trim=0 0 400 0, clip, width=\textwidth]{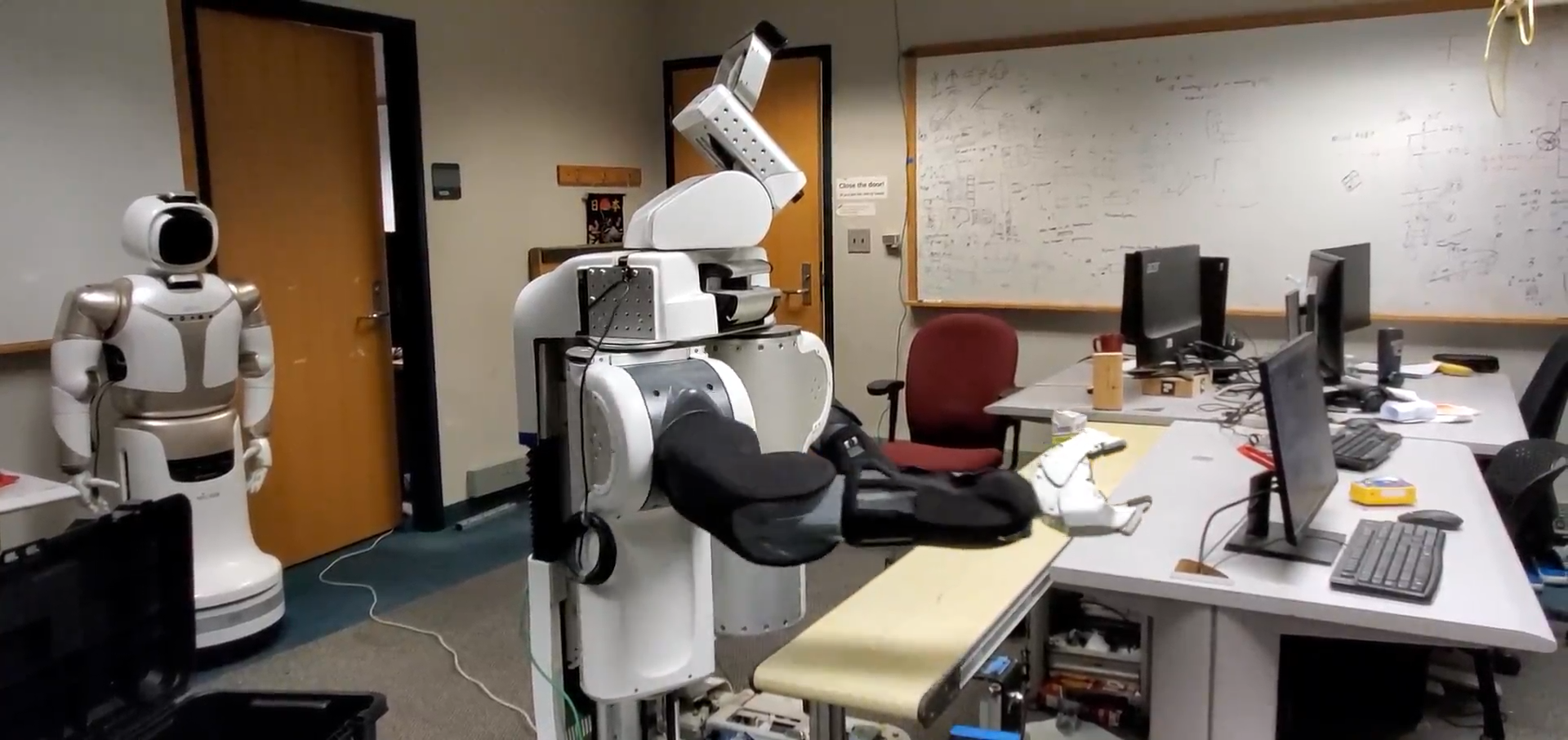}
        \caption{}
        \label{fig:demo3}
    \end{subfigure}
    \begin{subfigure}{0.31\textwidth}
        \includegraphics[trim=0 0 400 0, clip, width=\textwidth]{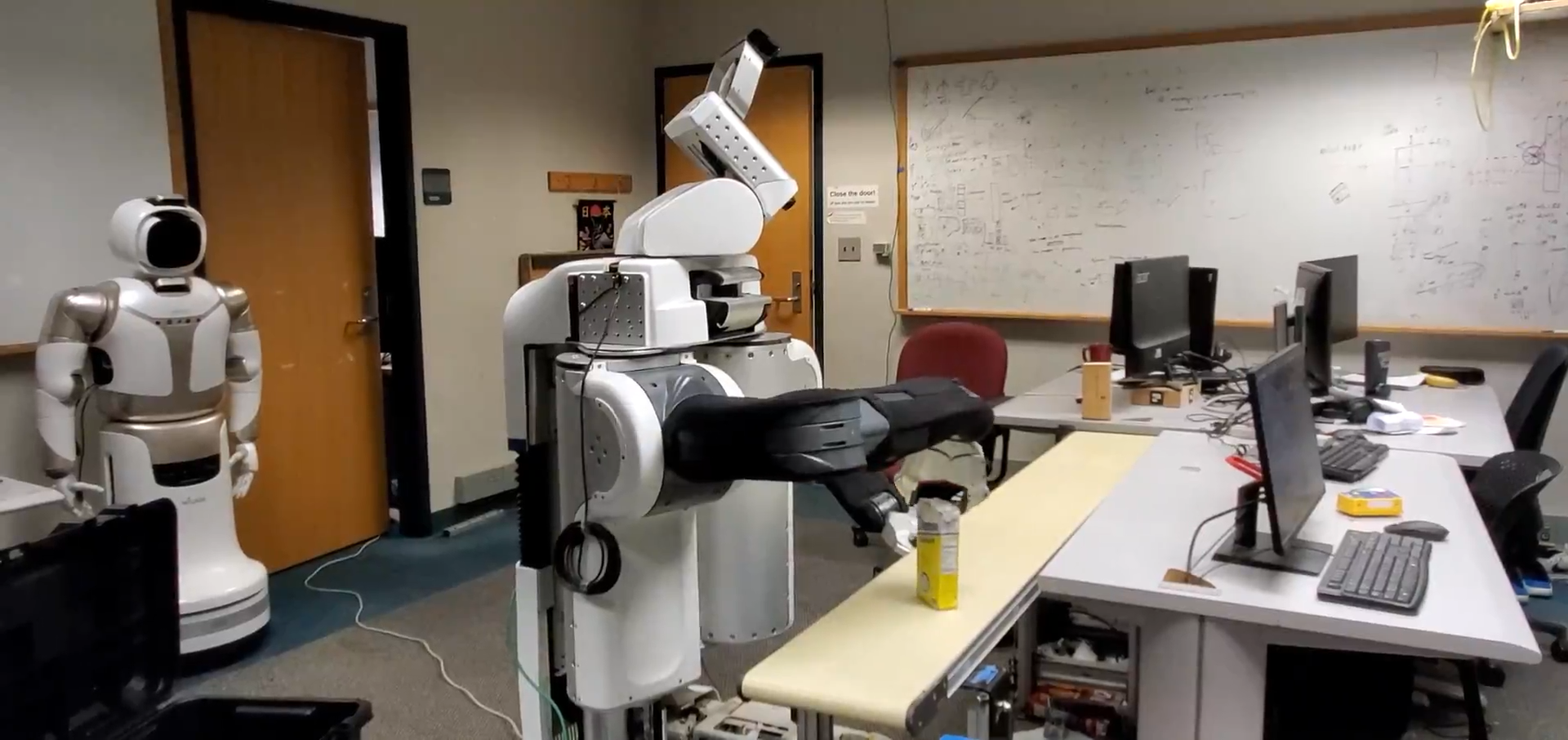}
        \caption{}
        \label{fig:demo4}
    \end{subfigure}
    \hspace{2mm}
    \begin{subfigure}{0.31\textwidth}
        \includegraphics[trim=0 0 400 0, clip, width=\textwidth]{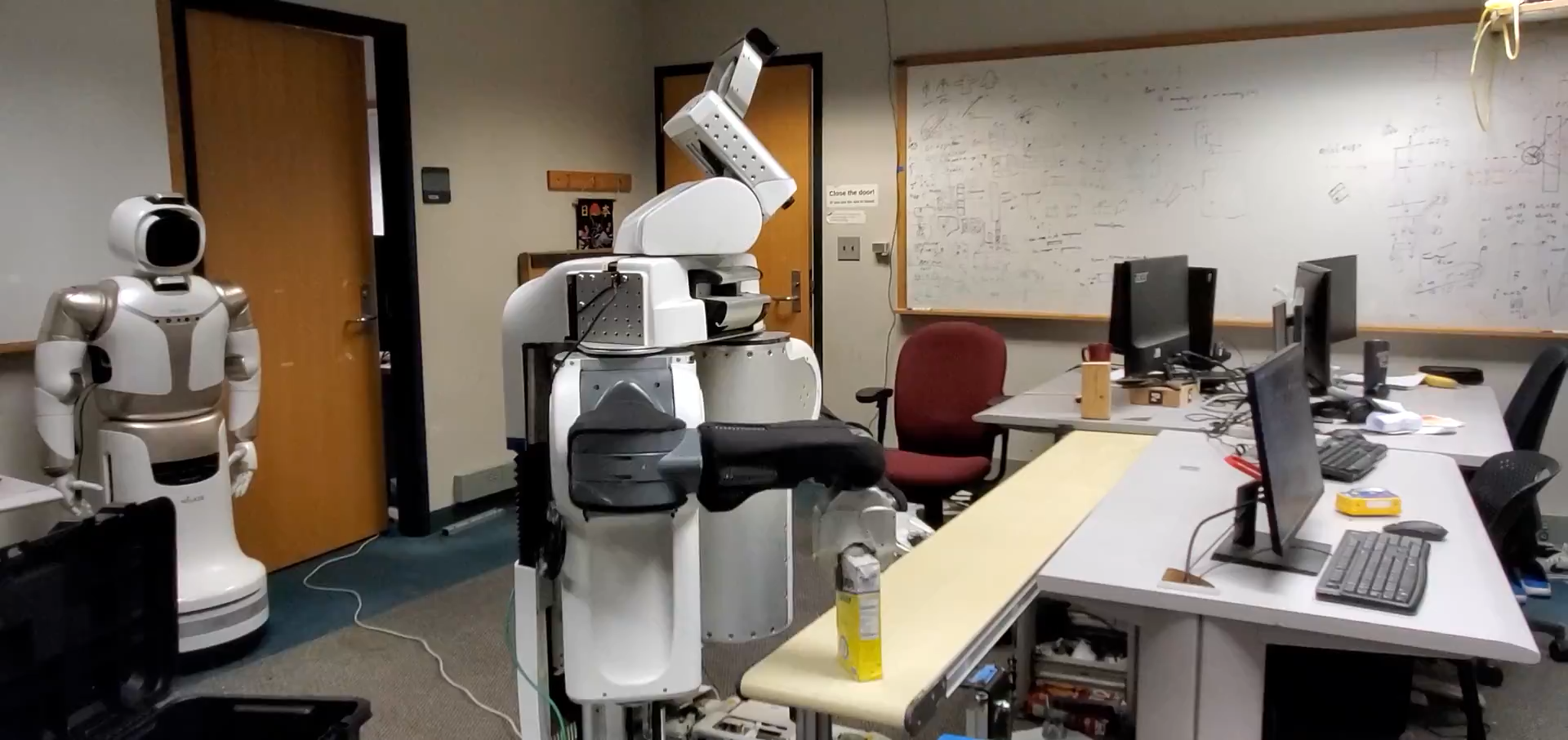}
        \caption{}
        \label{fig:demo5}
    \end{subfigure}
    \hspace{2mm}
    \begin{subfigure}{0.31\textwidth}
        \includegraphics[trim=0 0 400 0, clip, width=\textwidth]{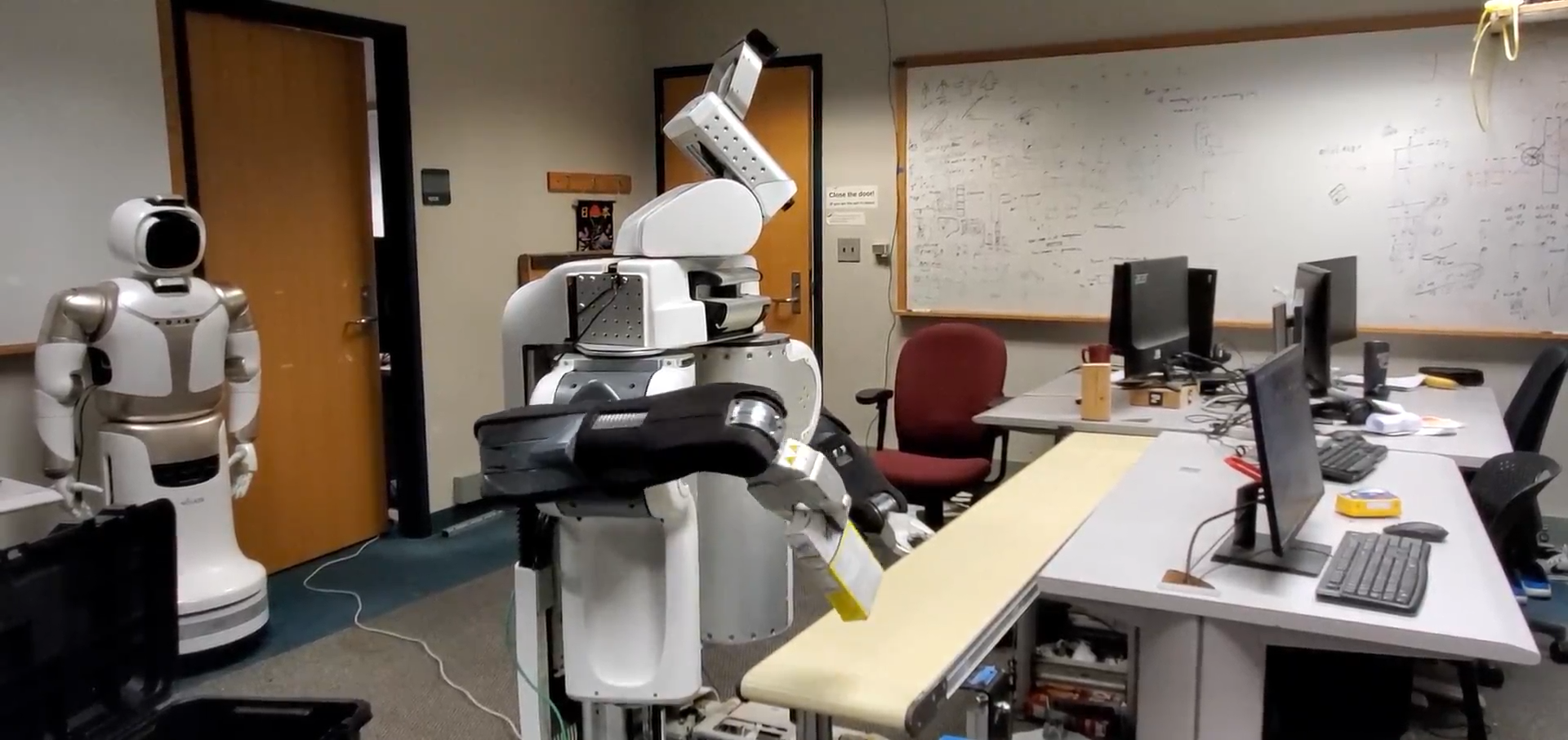}
        \caption{}
        \label{fig:demo6}
    \end{subfigure}
    \caption{\CaptionTextSize
    Snapshots from a real robot experiment.
    (\subref{fig:demo1})~The robot receives the first pose estimate from the perception system, generates the first plan within \Tbound and starts execution.
    (\subref{fig:demo2})~The robot receives the second pose estimate with a pose correction of distance~3cm and replans to the new goal
    (\subref{fig:demo3})~The robot receives the third and last pose estimate with no further correction and hence continues to follow the previous plan.
    (\subref{fig:demo4}) and (\subref{fig:demo5}) The robot executes the dynamic motion primitive to reach the grasp pose and glide along with the object.
    (\subref{fig:demo6}) The robot lifts up the sugar box from the conveyor.
    }
    \label{fig:demo}
\end{figure*}

\subsubsection{Simulation experiments}
We simulated the real world scenario to evaluate our method against other baselines. We compared our method with wA*~\cite{pohl1970heuristic}, E-graph~\cite{PCCL12} and RRT~\cite{lavalle1998rapidly}. 
For wA* and E-graph we use the same graph representation as our method. 
For E-graph we precompute five paths to randomly-selected  goals in \Gfull. 
We adapt the RRT algorithm to account for the under-defined goals. To do so, we sample pre-grasp poses along the conveyor 
and compute IK solutions for them to get a set of goal configurations for goal biasing. 
When a newly-added node falls within a threshold distance from the object, we use the same dynamic primitive that we use in the search-based methods to add the final grasping maneuver. If the primitive succeeds, we return success. We also allow wait actions at the pre-grasp locations.

For any planner to be used in our system, we need to endow it with a (possibly arbitrary) planning time bound to compute the future location of the object from which the new execution will start.
%
If the planner fails to generate the plan within this time, the robot fails to react to that pose update and such cases are recorded as planning failure. 
We label a run as a pickup success if the planner successfully replans once after the object crosses the 1.0m mark. The mark is the mean of accurate perception range that was determined experimentally and used in the robot experiments as described in Section \ref{sec:robot_results}.
The key takeaway from our experiments (Table~\ref{tab:sim_results}) is that having a known time bound on the query time is vital to the success of the conveyor pickup task.

Our method shows the highest pickup success rate, planning success rate (success rate over all planning queries) and an order of magnitude lower planning times compared to the other methods. 
We tested the other methods with several different time bounds. After our approach E-graph performed decently well. RRT suffers from the fact that the goal is under-defined and sampling based planners typically require a goal bias in the configuration space. Another important highlight of the experiments is the number of planning cycles over the lifetime of an object. While the other approaches could replan at most twice, our method was able to replan thrice due to fast planning times.

\ignore{
\begin{table*}[t]
\centering
\begin{tabular}{|c||c||c|c|c||c|c|c||c|c|c|}
\hline
   & \textbf{Our Method} 
   & \multicolumn{3}{c|}{wA*}
   & \multicolumn{3}{c|}{E-Graph}
   & \multicolumn{3}{c|}{RRT}
   \\ \hline
   & $T_{b}$ = 0.2 
   & $T_{b}$ = 0.5 & $T_{b}$ = 1.0 & $T_{b}$ = 2.0 
   & $T_{b}$ = 0.5 & $T_{b}$ = 1.0 & $T_{b}$ = 2.0 
   & $T_{b}$ = 0.5 & $T_{b}$ = 1.0 & $T_{b}$ = 2.0 
   \\ \hline
Pickup success rate [\%]                   & 92.00     & 0.00      & 0.00     & 18.00      & 0.00        & 0.00       & 80.00       & 0.00       & 0.00       & 18.00      \\ \hline
Planning success rate [\%]                  & 94.67     & 4.00       & 17.00      & 19.00       & 31.00    & 80.00       & 90.00      & 12.00      & 9.00       & 13.00       \\ \hline
Planning time [s]                    & 0.0689       & 0.4329        & 0.6284        & 0.8241        & 0.2830        & 0.4194        & 0.3112        & 0.2718        & 0.2518        & 0.1966        \\ \hline
Planning episodes per pickup & 3            & 2             & 2             & 2             & 2             & 2             & 2             & 2             & 2             & 2             \\ \hline
Path cost [s]                         & 10.11        & 8.19          & 8.28          & 7.60          & 8.54          & 8.22          & 7.90          & 9.68          & 8.96          & 8.04          \\ \hline
\end{tabular}
\caption{Simulation results. Here $T_b$ denotes the (possibly arbitrary) time bound that the algorithm uses. Note that for our method $T_b = \Tbound$ is the time bound that the algorithm is ensured to compute a plan.}
\label{tab:sim_results}
\end{table*}
}

\subsubsection{Preprocessing}
The statistics of the preprocessing stage (i.e. running Alg.~\ref{alg:preprocess}) are shown in Table~\ref{table:pp}. The offline time bound $T_{\calP}$ used in all of our experiments was 10$s$
%
%
%
In all experiments, we used $\Trc =$3.5$s$ and $\delta_t = $0.5$s$.
For the time-configuration planner the preprocessing took 2,534$s$. Only nine root paths are computed to cover 7,197 goals (three goals being unreachable and hence uncovered). For the states at $t=\Trc ($3.5$s)$, there were no latching failures and therefore, the algorithm terminates without preprocessing for earlier time stamps.
For the kinodynamic planner, the preprocessing takes about 7 hours. The dynamic constraints causes latching failures and therefore, the algorithm requires more preprocessing efforts. Due to latching failures at $t=$3.5, it needs to compute root paths for some of the states at this time step. Finally, it covers all the uncovered goals for states at $t=$3.0 via latching and finishes preprocessing.

\section{Conclusion and Discussion}
To summarize, we developed a provably constant-time planning and replanning algorithm that can be used to grasp fast moving objects off conveyor belts and evaluated it in simulation and in the real world on the PR2 robot. Through this work, we advocate the need for algorithms that guarantee (small) constant-time planning for time-critical applications, such as the conveyor pickup task, which are often encountered in warehouse and manufacturing environments. To this end we introduce and formalize a new class of algorithms called CTMP algorithms.

An interesting future research direction could be to leverage roadmap-based representation instead of storing individual paths in a way that the algorithm remains CTMP-complete, namely it maintains constant-time planning guarantee.
On the more practical side, a useful extension could be to parallelize the preprocessing computation over multiple CPU cores. One obvious way of doing so is within the Alg.~\ref{alg:step1}. The required uncovered goal region \Guncov can be divided over multiple threads.

Another useful extension to our CTMP approach is to be able to handle multiple objects simultaneously coming on the conveyor belt. This setting is common for a sorting task when multiple robot arms work at the same conveyor and they have to pick up one object while avoiding collisions with the other object. This makes the problem more challenging since the planner has to consider more than one dynamic objects in the scene.

\begin{acks}
This work was supported by the ONR grant N00014-18-1-2775 and the ARL grant W911NF-18-2-0218 as part of the A2I2 program.
In addition, it was partially supported by
the Israeli Ministry of Science \& Technology grant No. 102583 and by Grant No. 1018193 from the United States-Israel Binational Science Foundation (BSF) and by Grant No. 2019703 from the United States National Science Foundation (NSF).

The authors would like to thank Andrew Dornbush for providing support for the Search-Based Motion Planning Library~(\url{https://github.com/aurone/smpl}) which is used in our implementation. The authors would also like to thank Ellis Ratner for fruitful discussions and Anirudh Vemula for helping out in restoring the PR2 robot which is used in our experiments.

\end{acks}

\balance
\bibliographystyle{SageH}
\bibliography{main.bib}

\begin{thebibliography}{34}
\providecommand{\natexlab}[1]{#1}
\providecommand{\url}[1]{\texttt{#1}}
\providecommand{\urlprefix}{URL }
\expandafter\ifx\csname urlstyle\endcsname\relax
  \providecommand{\doi}[1]{DOI:\discretionary{}{}{}#1}\else
  \providecommand{\doi}{DOI:\discretionary{}{}{}\begingroup
  \urlstyle{rm}\Url}\fi

\bibitem[{Allen et~al.(1993)Allen, Timcenko, Yoshimi and
  Michelman}]{allen1993automated}
Allen PK, Timcenko A, Yoshimi B and Michelman P (1993)
  \href{https://ieeexplore.ieee.org/document/238279}{Automated tracking and
  grasping of a moving object with a robotic hand-eye system}.
\newblock \emph{{TRA}} 9(2): 152--165.

\bibitem[{Belghith et~al.(2006)Belghith, Kabanza, Hartman and
  Nkambou}]{belghith2006anytime}
Belghith K, Kabanza F, Hartman L and Nkambou R (2006)
  \href{https://ieeexplore.ieee.org/abstract/document/1642057?casa_token=G5FDcaBUkW0AAAAA:4567V8UdQ5zKLBml7NRiWPdWPT95Q78qkxYlkdUBdpfk2sYrXPUdyBFIRswCLs2RmZpeH_A}{Anytime
  dynamic path-planning with flexible probabilistic roadmaps}.
\newblock In: \emph{Proceedings of the IEEE International Conference on
  Robotics and Automation, 2006. ICRA 2006.} IEEE, pp. 2372--2377.

\bibitem[{Berenson et~al.(2012)Berenson, Abbeel and Goldberg}]{BAG12}
Berenson D, Abbeel P and Goldberg K (2012)
  \href{https://ieeexplore.ieee.org/document/6224742}{A robot path planning
  framework that learns from experience}.
\newblock In: \emph{{Proceedings of the IEEE International Conference on
  Robotics and Automation (ICRA)}}. pp. 3671--3678.

\bibitem[{Cefalo et~al.(2013)Cefalo, Oriolo and Vendittelli}]{cefalo2013task}
Cefalo M, Oriolo G and Vendittelli M (2013)
  \href{https://ieeexplore.ieee.org/abstract/document/6697190}{Task-constrained
  motion planning with moving obstacles}.
\newblock In: \emph{2013 IEEE/RSJ International Conference on Intelligent
  Robots and Systems}. IEEE, pp. 5758--5763.

\bibitem[{Cohen et~al.(2010)Cohen, Chitta and Likhachev}]{CCL10}
Cohen BJ, Chitta S and Likhachev M (2010)
  \href{https://ieeexplore.ieee.org/document/5509685}{Search-based planning for
  manipulation with motion primitives}.
\newblock In: \emph{{Proceedings of the IEEE International Conference on
  Robotics and Automation (ICRA)}}. pp. 2902--2908.

\bibitem[{Cohen et~al.(2011)Cohen, Subramania, Chitta and Likhachev}]{CSCL11}
Cohen BJ, Subramania G, Chitta S and Likhachev M (2011)
  \href{https://ieeexplore.ieee.org/document/5509685}{Planning for Manipulation
  with Adaptive Motion Primitives}.
\newblock In: \emph{{Proceedings of the IEEE International Conference on
  Robotics and Automation (ICRA)}}. pp. 5478--5485.

\bibitem[{Coleman et~al.(2015)Coleman, {\c{S}}ucan, Moll, Okada and
  Correll}]{CSMOC15}
Coleman D, {\c{S}}ucan IA, Moll M, Okada K and Correll N (2015)
  \href{https://ieeexplore.ieee.org/document/7139284}{Experience-based planning
  with sparse roadmap spanners}.
\newblock In: \emph{{Proceedings of the IEEE International Conference on
  Robotics and Automation (ICRA)}}. pp. 900--905.

\bibitem[{Cowley et~al.(2013)Cowley, Cohen, Marshall, Taylor and
  Likhachev}]{cowley2013perception}
Cowley A, Cohen B, Marshall W, Taylor CJ and Likhachev M (2013)
  \href{https://ieeexplore.ieee.org/abstract/document/6696445}{Perception and
  motion planning for pick-and-place of dynamic objects}.
\newblock In: \emph{{Proceedings of the IEEE/RSJ International Conference on
  Intelligent Robots and Systems (IROS)}}. pp. 816--823.

\bibitem[{Czech et~al.(1997)Czech, Havas and Majewski}]{czech1997perfect}
Czech ZJ, Havas G and Majewski BS (1997)
  \href{https://www.sciencedirect.com/science/article/pii/S0304397596001466}{Perfect
  hashing}.
\newblock \emph{Theoretical Computer Science} 182(1-2): 1--143.

\bibitem[{Dobson and Bekris(2014)}]{DB14}
Dobson A and Bekris KE (2014)
  \href{https://journals.sagepub.com/doi/abs/10.1177/0278364913498292}{Sparse
  roadmap spanners for asymptotically near-optimal motion planning}.
\newblock \emph{{The International Journal of Robotics Research (IJRR)}} 33(1):
  18--47.

\bibitem[{Fraichard(1993)}]{fraichard1993dynamic}
Fraichard T (1993)
  \href{https://ieeexplore.ieee.org/abstract/document/583794}{Dynamic
  trajectory planning with dynamic constraints: A'state-time space'approach}.
\newblock In: \emph{IROS}, volume~2. pp. 1393--1400.

\bibitem[{Han et~al.(2019)Han, Feng and Yu}]{han2019toward}
Han SD, Feng SW and Yu J (2019)
  \href{https://ieeexplore.ieee.org/document/8938744}{Toward Fast and Optimal
  Robotic Pick-and-Place on a Moving Conveyor}.
\newblock \emph{{IEEE Robotics and Automation Letters (RA-L)}} 5(2): 446--453.

\bibitem[{Hinterstoisser et~al.(2012)Hinterstoisser, Lepetit, Ilic, Holzer,
  Bradski, Konolige and Navab}]{add_metric}
Hinterstoisser S, Lepetit V, Ilic S, Holzer S, Bradski G, Konolige K and Navab
  N (2012)
  \href{https://link.springer.com/chapter/10.1007/978-3-642-37331-2_42}{Model
  based training, detection and pose estimation of texture-less 3D objects in
  heavily cluttered scenes}.
\newblock In: \emph{{Proceedings of the Asian Conference on Computer Vision}}.
  pp. 548--562.

\bibitem[{Ishida and Korf(1991)}]{ishida1991moving}
Ishida T and Korf RE (1991)
  \href{https://www.ijcai.org/Proceedings/91-1/Papers/033.pdf}{Moving Target
  Search}.
\newblock In: \emph{{Proceedings of the International Joint Conference on
  Artificial Intelligence (IJCAI)}}, volume~91. pp. 204--210.

\bibitem[{Ishida and Korf(1995)}]{ishida1995moving}
Ishida T and Korf RE (1995)
  \href{https://ieeexplore.ieee.org/document/387507}{Moving-target search: A
  real-time search for changing goals}.
\newblock \emph{IEEE Transactions on Pattern Analysis and Machine Intelligence}
  17(6): 609--619.

\bibitem[{Islam et~al.(2020)Islam, Salzman, Agarwal and
  Likhachev}]{islamprovably}
Islam F, Salzman O, Agarwal A and Likhachev M (2020)
  \href{http://www.roboticsproceedings.org/rss16/p025.pdf}{Provably
  Constant-time Planning and Replanning for Real-time Grasping Objects off a
  Conveyor Belt}.
\newblock In: \emph{{Proceedings of Robotics: Science and Systems (RSS)}}.

\bibitem[{Islam et~al.(2019)Islam, Salzman and Likhachev}]{ISL19}
Islam F, Salzman O and Likhachev M (2019)
  \href{https://aaai.org/ojs/index.php/ICAPS/article/view/3540}{Provable
  Indefinite-Horizon Real-Time Planning for Repetitive Tasks}.
\newblock In: \emph{{Proceedings of the International Conference on Automated
  Planning and Scheduling (ICAPS)}}. pp. 716--724.

\bibitem[{Kavraki et~al.(1996)Kavraki, Svestka, Latombe and
  Overmars}]{kavraki1996probabilistic}
Kavraki LE, Svestka P, Latombe JC and Overmars MH (1996)
  \href{http://ieeexplore.ieee.org/document/508439}{Probabilistic roadmaps for
  path planning in high-dimensional configuration spaces}.
\newblock \emph{{IEEE Robotics and Automation Letters (RA-L)}} 12(4): 566--580.

\bibitem[{Koenig and Likhachev(2006)}]{KL06}
Koenig S and Likhachev M (2006)
  \href{https://dl.acm.org/doi/10.1145/1160633.1160682}{Real-time adaptive
  {A}*}.
\newblock In: \emph{{Proceedings of the International Joint Conference on
  Autonomous Agents and Multiagent Systems (AAMAS)}}. pp. 281--288.

\bibitem[{Koenig et~al.(2007)Koenig, Likhachev and Sun}]{koenig2007speeding}
Koenig S, Likhachev M and Sun X (2007)
  \href{https://dl.acm.org/doi/10.1145/1329125.1329353}{Speeding up
  moving-target search}.
\newblock In: \emph{{Proceedings of the International Joint Conference on
  Autonomous Agents and Multiagent Systems (AAMAS)}}. pp. 1--8.

\bibitem[{Koenig and Sun(2009)}]{KS09}
Koenig S and Sun X (2009)
  \href{http://idm-lab.org/bib/abstracts/papers/jaamas09.pdf}{Comparing
  real-time and incremental heuristic search for real-time situated agents}.
\newblock \emph{{Autonomous Agents and Multi-Agent Systems}} 18(3): 313--341.

\bibitem[{Korf(1990)}]{korf1990real}
Korf RE (1990)
  \href{https://www.sciencedirect.com/science/article/abs/pii/0004370290900544}{Real-time
  heuristic search}.
\newblock \emph{Artificial intelligence} 42(2-3): 189--211.

\bibitem[{LaValle(1998)}]{lavalle1998rapidly}
LaValle SM (1998)
  \href{http://citeseerx.ist.psu.edu/viewdoc/summary?doi=10.1.1.35.1853}{Rapidly-exploring
  random trees: A new tool for path planning} .

\bibitem[{Likhachev and Ferguson(2009)}]{LF09}
Likhachev M and Ferguson D (2009)
  \href{https://journals.sagepub.com/doi/10.1177/0278364909340445}{Planning
  Long Dynamically Feasible Maneuvers for Autonomous Vehicles}.
\newblock \emph{{The International Journal of Robotics Research (IJRR)}} 28(8):
  933--945.

\bibitem[{Menon et~al.(2014)Menon, Cohen and Likhachev}]{menon2014motion}
Menon A, Cohen B and Likhachev M (2014)
  \href{https://ieeexplore.ieee.org/document/6906895}{Motion planning for
  smooth pickup of moving objects}.
\newblock In: \emph{{Proceedings of the IEEE International Conference on
  Robotics and Automation (ICRA)}}. pp. 453--460.

\bibitem[{Narayanan and Likhachev(2016)}]{perch}
Narayanan V and Likhachev M (2016)
  \href{https://arxiv.org/abs/1510.05613}{PERCH: Perception via search for
  multi-object recognition and localization}.
\newblock In: \emph{{Proceedings of the IEEE International Conference on
  Robotics and Automation (ICRA)}}. pp. 5052--5059.

\bibitem[{Phillips et~al.(2012)Phillips, Cohen, Chitta and Likhachev}]{PCCL12}
Phillips M, Cohen BJ, Chitta S and Likhachev M (2012)
  \href{http://www.roboticsproceedings.org/rss08/p43.html}{{E}-{G}raphs:
  Bootstrapping Planning with Experience Graphs}.
\newblock In: \emph{{Proceedings of Robotics: Science and Systems (RSS)}}.

\bibitem[{Pohl(1970)}]{pohl1970heuristic}
Pohl I (1970) \href{Heuristic search viewed as path finding in a graph}.
\newblock \emph{Artificial intelligence} 1(3-4): 193--204.

\bibitem[{Salzman et~al.(2014)Salzman, Shaharabani, Agarwal and
  Halperin}]{SSAH14}
Salzman O, Shaharabani D, Agarwal PK and Halperin D (2014)
  \href{https://journals.sagepub.com/doi/abs/10.1177/0278364914556517?journalCode=ijra}{Sparsification
  of motion-planning roadmaps by edge contraction}.
\newblock \emph{{The International Journal of Robotics Research (IJRR)}}
  33(14): 1711--1725.

\bibitem[{Stogl et~al.(2017)Stogl, Zumkeller, Navarro, Heilig and
  Hein}]{stogl2017tracking}
Stogl D, Zumkeller D, Navarro SE, Heilig A and Hein B (2017)
  \href{https://ieeexplore.ieee.org/document/8247651}{Tracking, reconstruction
  and grasping of unknown rotationally symmetrical objects from a conveyor
  belt}.
\newblock In: \emph{{IEEE International Conference on Emerging Technologies and
  Factory Automation (ETFA)}}. IEEE, pp. 1--8.

\bibitem[{Sun et~al.(2010)Sun, Yeoh and Koenig}]{sun2010moving}
Sun X, Yeoh W and Koenig S (2010)
  \href{http://idm-lab.org/bib/abstracts/papers/aamas10a.pdf}{Moving target
  {D}* lite}.
\newblock In: \emph{{Proceedings of the International Joint Conference on
  Autonomous Agents and Multiagent Systems (AAMAS)}}. pp. 67--74.

\bibitem[{Yang et~al.(2018)Yang, Merkt, Ivan and
  Vijayakumar}]{yang2018planning}
Yang Y, Merkt W, Ivan V and Vijayakumar S (2018)
  \href{https://ieeexplore.ieee.org/abstract/document/8624989}{Planning in
  time-configuration space for efficient pick-and-place in non-static
  environments with temporal constraints}.
\newblock In: \emph{2018 IEEE-RAS 18th International Conference on Humanoid
  Robots (Humanoids)}. IEEE, pp. 1--9.

\bibitem[{Yoshida and Kanehiro(2011)}]{yoshida2011reactive}
Yoshida E and Kanehiro F (2011)
  \href{https://ieeexplore.ieee.org/abstract/document/5980361?casa_token=AQDABYkiTTcAAAAA:f1uuaKJaZqOx-FuzzDmpE6lBTNMFm8l64dDnNUFEuH253nirGCc8Ifxjnlccpg6d0kL5sWk}{Reactive
  robot motion using path replanning and deformation}.
\newblock In: \emph{Proceedings of the IEEE International Conference on
  Robotics and Automation (ICRA)}. IEEE, pp. 5456--5462.

\bibitem[{Zhang et~al.(2018)Zhang, Li, Ripperger, Nicho, Veeraraghavan and
  Fumagalli}]{zhang2018gilbreth}
Zhang Y, Li L, Ripperger M, Nicho J, Veeraraghavan M and Fumagalli A (2018)
  \href{http://ieeexplore.ieee.org/document/8329876/}{Gilbreth: A conveyor-belt
  based pick-and-sort industrial robotics application}.
\newblock In: \emph{{International Conference on Robotic Computing (IRC)}}. pp.
  17--24.

\end{thebibliography}

\end{document}